\patchcmd\@combinedblfloats{\box\@outputbox}{\unvbox\@outputbox}{}{
   \errmessage{\noexpand\@combinedblfloats could not be patched}
}
\declaretheorem[numberwithin=section]{lemma}
\declaretheorem[numberwithin=section]{definition}
\newcommand*{\low}{\ensuremath{\alpha}}
\newcommand*{\high}{\ensuremath{\beta}}
\newcommand*{\E}{\ensuremath{\mathbb{E}}}
\newcommand*{\Var}{\ensuremath{\mathbb{V}}}
\newcommand*{\V}{\ensuremath{\mathbb{V}}}
\newcommand*{\Cov}{\ensuremath{\mathrm{Cov}}}
\newcommand*{\States}{\ensuremath{\mathcal{S}}}
\newcommand*{\Actions}{\ensuremath{\mathcal{U}}}
\newcommand*{\R}{\ensuremath{\mathbb{R}}}
\newcommand*{\clip}{\ensuremath{\mathrm{clip}}}
\newcommand*{\pitheta}{\ensuremath{{\pi_\theta}}}
\newcommand*{\Borel}{\ensuremath{\mathcal{B}}}
\newcommand*{\I}{\ensuremath{\mathds{1}}}
\newcommand*{\capgpsi}{\ensuremath{\bar{\psi}}}
\newcommand*{\clippedpi}{\ensuremath{\bar{\pi}}}
\newcommand*{\clippedPi}{\ensuremath{\bar{\Pi}}}
\let\vec\mathbf
\let\mathbb\mathds
\renewcommand{\bar}[1]{\mkern 1.5mu\overline{\mkern-1.5mu#1\mkern-1.5mu}\mkern 1.5mu}
\icmltitlerunning{Clipped Action Policy Gradient}
\begin{document}

\twocolumn[
\icmltitle{Clipped Action Policy Gradient}

\icmlsetsymbol{equal}{*}

\begin{icmlauthorlist}
\icmlauthor{Yasuhiro Fujita}{pfn}
\icmlauthor{Shin-ichi Maeda}{pfn}
\end{icmlauthorlist}

\icmlaffiliation{pfn}{Preferred Networks, Inc., Japan}

\icmlcorrespondingauthor{Yasuhiro Fujita}{fujita@preferred.jp}
\icmlcorrespondingauthor{Shin-ichi Maeda}{ichi@preferred.jp}

\icmlkeywords{Reinforcement Learning, Policy Optimization, Policy Gradient, Continuous Control}

\vskip 0.3in
]

\printAffiliationsAndNotice{}

\begin{abstract}

Many continuous control tasks have bounded action spaces.
When policy gradient methods are applied to such tasks, out-of-bound actions need to be clipped before execution, while policies are usually optimized as if the actions are not clipped.
We propose a policy gradient estimator that exploits the knowledge of actions being clipped to reduce the variance in estimation.
We prove that our estimator, named clipped action policy gradient (CAPG), is unbiased and achieves lower variance than the conventional estimator that ignores action bounds.
Experimental results demonstrate that CAPG generally outperforms the conventional estimator, indicating that it is a better policy gradient estimator for continuous control tasks.
The source code is available at \url{https://github.com/pfnet-research/capg}.
\end{abstract}

\section{Introduction}
\label{sec:introduction}

Reinforcement learning (RL) has achieved remarkable success in recent years in a wide range of challenging tasks, such as games \cite{Mnih2015, Silver2016a, Silver2017}, robotic manipulation \cite{Levine2016}, and locomotion \cite{Schulman2015e, Schulman2017b, Heess2017}, with the help of deep neural networks.
Policy gradient methods are among the most successful model-free RL algorithms \cite{Mnih2016b, Schulman2015e, Schulman2017b, Gu2017b}.
They are particularly suitable for continuous control tasks, i.e., environments with continuous action spaces, because they directly improve policies that represent continuous distributions of actions to maximize expected returns.
For continuous control tasks, policies are typically represented by Gaussian distributions conditioned on current and past observations.

Although Gaussian policies have unbounded support, continuous control tasks often have bounded action sets that they can execute \cite{Duan2016, Brockman2016, Tassa2018}.
For example, when controlling the torques of motors, effective torque values will be physically constrained.
Policies with unbounded support like Gaussian policies are usually applied to such tasks by clipping sampled actions into their bounds \cite{Duan2016, baselines}.
Policy gradients for such policies are estimated as if actions were not clipped \cite{Chou2017}.

In this study, we demonstrate that we can improve policy gradient methods by exploiting the knowledge of actions being clipped.
We prove that the variance of policy gradient estimates can be strictly reduced under mild assumptions that hold for popular policy representations such as Gaussian policies with diagonal covariance matrices.
Our proposed algorithm, named clipped action policy gradient (CAPG), is an alternative unbiased policy gradient estimator with a lower variance than the conventional estimator.
Our experimental results on MuJoCo-simulated continuous control benchmark problems \cite{Todorov2012, Brockman2016} show that CAPG can improve the performance of existing policy gradient-based deep RL algorithms.

\section{Preliminaries}
\label{sec:preliminaries}

We consider a Markov decision process (MDP) defined by the tuple $(\States, \Actions, P, r, \rho_0, \gamma)$, where $\States$ is a set of possible states, $\Actions$ is a set of possible actions, $P$ is a state-transition probability distribution, $r: \States \times \Actions \rightarrow \R$ is a reward function, $\rho_0$ is a distribution of the initial state $s_0$, and $\gamma \in (0,1]$ is a discount factor.

A probability distribution of action conditioned on state is referred to as a policy.
The probability density function (PDF) of a policy is denoted by $\pi$.
RL algorithms aim to find a policy that maximizes the expected cumulative discounted reward from initial states,
\begin{equation}
  \eta(\pi) = \E_{s_0,u_0,\dots} \Big[ \sum_t \gamma^t r(s_t,u_t) \Big| \pi \Big],
\end{equation}
where $\E_{s_0,u_0,\dots}[\cdot|\pi]$ denotes an expected value with respect to a state-action sequence $s_0 \sim \rho_0(\cdot), u_0 \sim \pi(\cdot|s_0), s_1 \sim P(\cdot|s_0,u_0), u_1 \sim \pi(\cdot|s_1), \dots$.

The state-action value function of a policy $\pi$ is defined as
\begin{equation}
  Q^\pi(s,u) = \E_{s_1,u_1,\dots} \Big[ \sum_t \gamma^t r(s_t,u_t) \Big| s_0 = s, u_0 = u, \pi \Big].
\end{equation}

One way to find $\pi^* = \text{argmax}_\pi \eta(\pi)$ is to adjust the parameters $\theta$ of a parameterized policy $\pi_\theta$ by following the gradient $\nabla_\theta \eta(\pi_\theta)$, which is referred to a policy gradient.
The policy gradient theorem \cite{Sutton1999} states that
\begin{equation}
\label{eq:policy_gradient}
\nabla_\theta \eta(\pi_\theta) = \E_s \Big[ \E_u [Q^{\pi_\theta}(s,u) \psi(s,u)|s] \Big],
\end{equation}
where $\psi(s,u) = \nabla_\theta \log \pi_\theta(u|s)$, $\E_u[\cdot|s]$ denotes a conditional expected value with respect to $\pitheta(\cdot|s)$, and $\E_s[\cdot]$ denotes an (improper) expected value with respect to the (improper) discounted state distribution $\rho^\pitheta(\cdot)$, which is defined as
\begin{equation}
  \label{eq:improper_state_distribution}
  \rho^\pi(s) = \sum_{t} \gamma^{t}\int\rho_0(s_0)p(s_t=s|s_0,\pi)ds_0.
\end{equation}
In practice, the policy gradient is often estimated by a finite number of samples $\{(s^{(i)},u^{(i)})|u^{(i)} \sim \pi_\theta(\cdot|s^{(i)}), i=1,\dots,N\}$.
\begin{equation}
\begin{split}
  \nabla_\theta \eta(\pi_\theta) \approx \frac{1}{N} \sum_i Q^{\pi_\theta}(s^{(i)},u^{(i)}) \psi(s^{(i)},u^{(i)}).
\end{split}
\label{eq:sample_based_estimation}
\end{equation}
RL algorithms that rely on this estimation are referred to as policy gradient methods.
While this estimation is unbiased, its variance is typically high and is considered as a crucial problem of policy gradient methods.

We address the problem by estimating $\nabla_\theta \eta(\pi_\theta)$ in an unbiased and lower-variance\footnote{
When $\theta$ is not a scalar, we consider the variance of gradients with respect to each element of $\theta$ throughout the paper.
} manner than \eqref{eq:sample_based_estimation}.
To this end, we derive a random variable $Y$ such that $\V[Y] \le \V[X]$ and $\E[Y] = \E[X]$, where $X=Q^{\pi_\theta}(s,u) \psi(s,u)$.
Because $\E[X] = \E_s[\E_u[X|s]]$ and $\V[X] = \V_s[\E_u[X|s]] + \E_s[\V_u[X|s]]$, it is sufficient to show
\begin{align}
\E_u[Y|s] &= \E_u[X|s], \label{eq:e_condition}\\
\V_u[Y|s] &\le \V_u[X|s]  \label{eq:v_condition}
\end{align}
for all $s$.
For notational simplicity, $\E_u [\cdot|s]$ and $\V_u [\cdot|s]$ are written as $\E_u [\cdot]$ and $\V_u [\cdot]$ below, respectively.

The exact value of $Q^{\pi_\theta}(s,u)$ is usually not available and needs to be estimated.
It is often estimated using observed rewards after executing $u$ at $s$, sometimes combined with function approximation to balance bias and variance \cite{Schulman2016, Mnih2016b}, but this is possible only for $u$ that is executed at $s$.
Our algorithm assumes the estimates of $Q^{\pi_\theta}(s,u)$ only for such $(s,u)$ pairs to be available, and thus is applicable to such cases.

\section{Clipped Action Policy Gradient}

We consider the case where any action $u \in \R^d$ ($d \ge 1$) chosen by an agent is clipped by the environment into a range $[\low, \high] \subset \R^d$.
That is, the state-transition PDF and the reward function satisfy
\begin{align}
  P(s' | s, u) &= P(s' | s,\clip(u, \low, \high)), \label{eq:clipped_transition} \\
  r(s, u)     &= r(s, \clip(u, \low, \high)), \label{eq:clipped_reward}
\end{align}
respectively.
The $\clip$ function is defined as $\clip(u, \low, \high) = \max(\min(u, \high), \low)$, where $\max$ and $\min$ are computed elementwise when $u$ is a vector, i.e., $d \ge 2$.
Each of $\low$ and $\high$ can be a constant or a function of $s$.
The case where the reward function depends on actions before clipping is discussed in Section~\ref{subsec:extensions}.

Before explaining our algorithm, let us characterize the class of policies we consider in this study.
\begin{definition}[compatible PDF]
Let $p_{\theta}(u)$ be a PDF of $u \in \R$ that has a parameter $\theta$.
If $p_{\theta}(u)$ is differentiable with respect to $\theta$ and allows the exchange of derivative and integral as
$\int_{-\infty}^\low \nabla_\theta p_\theta(u)du = \nabla_\theta \int_{-\infty}^\low p_\theta(u)du$ and
$\int_{\high}^\infty \nabla_\theta p_\theta(u)du = \nabla_\theta \int_{\high}^\infty p_\theta(u)du$, we call $p_\theta(u)$ a compatible PDF.
If $p_{\theta}(u|s)$ is a conditional PDF that satisfies these conditions, we call it a compatible conditional PDF.
\end{definition}

\subsection{Scalar actions}
\label{subsec:scalar_action}

First, we derive an unbiased and lower-variance estimator of the policy gradient for scalar actions, i.e., $d = 1$.
The case of vector actions will be covered later in Section~\ref{subsec:vector_action}.

From \eqref{eq:clipped_transition} and \eqref{eq:clipped_reward}, the state-action value function satisfies
\begin{align}
Q^{\pi_\theta}(s,u)
&= Q^{\pi_\theta}(s,\clip(u,\low,\high)) \\
&=
\begin{cases}
    Q^{\pi_\theta}(s,\low) & \text{if } u \le \low\\
    Q^{\pi_\theta}(s,u)    & \text{if } \low < u < \high\\
    Q^{\pi_\theta}(s,\high)& \text{if } \high \le u
\end{cases}.
\label{eq:disjoint_q}
\end{align}

Let $X$ be a random variable that depends on $u$ and $\I_{f(u)}$ be an indicator function that takes 1 when $u$ satisfies the condition $f(u)$, otherwise 0.
Because $X = \I_{u \le \low} X + \I_{\low < u < \high} X + \I_{\high \le u} X$, $\E_u[X]$ can be decomposed as
\begin{equation}
 \E_u [X] = \E_u [\I_{u \le \low} X] + \E_u [\I_{\low < u < \high} X] + \E_u [\I_{\high \le u} X].
\label{eq:disjoint_mean}
\end{equation}
From \eqref{eq:disjoint_q} and \eqref{eq:disjoint_mean}, we have
\begin{align}
\MoveEqLeft
\E_u[Q^{\pi_\theta}(s,u) \psi(s,u)] \\
&=
\begin{aligned}[t]
&Q^{\pi_\theta}(s,\low) \E_u [\I_{u \le \low}  \nabla_\theta \log \pi_\theta(u|s)] \\
&+\E_u [\I_{\low < u < \high} Q^{\pi_\theta}(s,u) \nabla_\theta \log \pi_\theta(u|s)] \\
&+Q^{\pi_\theta}(s,\high) \E_u [\I_{\high \le u} \nabla_\theta \log \pi_\theta(u|s)].
\end{aligned}
\label{eq:original_disjoint_estimator}
\end{align}

Meanwhile, the following useful lemma holds.
\begin{restatable}{lemma}{lemmaestimator}
\label{lemma:estimator}
Suppose $\pi_\theta(u|s)$ is a compatible conditional PDF of $u \in \R$ whose cumulative distribution function (CDF) is $\Pi_\theta(u|s)$.
Then, the following equations hold:
\begin{align}
\E_u[\I_{u \le \low} \nabla_\theta \log \pi_\theta(u|s)]  &= \E_u [\I_{u \le \low} \nabla_\theta \log \Pi_\theta(\low|s)],\\
\E_u[\I_{\high \le u} \nabla_\theta \log \pi_\theta(u|s)] &= \E_u [\I_{\high \le u} \nabla_\theta \log (1 - \Pi_\theta(\high|s))].
\end{align}
\end{restatable}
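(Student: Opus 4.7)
The plan is to show that both sides of each equation reduce to the same quantity, namely $\nabla_\theta \Pi_\theta(\low|s)$ for the first identity and $-\nabla_\theta \Pi_\theta(\high|s)$ for the second. The key tool is the log-derivative trick together with the exchange-of-derivative-and-integral property built into the definition of a compatible conditional PDF.

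For the first identity, I would start with the left-hand side, write the expectation explicitly as
\[
\E_u[\I_{u \le \low} \nabla_\theta \log \pi_\theta(u|s)] = \int_{-\infty}^{\low} \nabla_\theta \log \pi_\theta(u|s)\,\pi_\theta(u|s)\,du = \int_{-\infty}^{\low} \nabla_\theta \pi_\theta(u|s)\,du,
\]
and then invoke the compatibility assumption to pull $\nabla_\theta$ outside the integral, obtaining $\nabla_\theta \Pi_\theta(\low|s)$. For the right-hand side, since $\nabla_\theta \log \Pi_\theta(\low|s)$ does not depend on $u$, it factors out of the expectation:
\[
\E_u[\I_{u \le \low} \nabla_\theta \log \Pi_\theta(\low|s)] = \Pi_\theta(\low|s)\,\nabla_\theta \log \Pi_\theta(\low|s) = \nabla_\theta \Pi_\theta(\low|s),
\]
using $\E_u[\I_{u \le \low}] = \Pi_\theta(\low|s)$ and the log-derivative identity in reverse. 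The two sides agree.

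The second identity is handled in the same way, with the complementary CDF $1 - \Pi_\theta(\high|s) = \int_\high^\infty \pi_\theta(u|s)\,du$ playing the role that $\Pi_\theta(\low|s)$ did above. The left-hand side becomes $\int_\high^\infty \nabla_\theta \pi_\theta(u|s)\,du$, which by the second half of the compatibility assumption equals $\nabla_\theta (1 - \Pi_\theta(\high|s))$; the right-hand side equals $(1 - \Pi_\theta(\high|s))\,\nabla_\theta \log(1 - \Pi_\theta(\high|s))$, which simplifies to the same expression.

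I do not expect any serious obstacle: every step is essentially the log-derivative trick combined with the hypothesis that differentiation and integration may be swapped on the two tail intervals $(-\infty,\low]$ and $[\high,\infty)$, which is precisely what the compatible PDF definition grants. The only point to be careful about is ensuring the derivative-integral swap is applied to the two tail intervals that appear in the lemma rather than to the whole real line, and noting that $\nabla_\theta \log \Pi_\theta(\low|s)$ and $\nabla_\theta \log(1-\Pi_\theta(\high|s))$ can be pulled out of the expectations because they are deterministic functions of $s$ and $\theta$ only.
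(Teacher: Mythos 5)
Your proposal is correct and matches the paper's own proof in all essentials: both arguments reduce each side to $\nabla_\theta \Pi_\theta(\low|s)$ (resp.\ $-\nabla_\theta \Pi_\theta(\high|s)$) via the log-derivative identity and the derivative--integral exchange granted by compatibility on the two tail intervals. The only cosmetic difference is that the paper writes it as a single chain of equalities from left-hand side to right-hand side rather than meeting in the middle.
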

See the appendix for the proof.

By applying Lemma~\ref{lemma:estimator} to \eqref{eq:original_disjoint_estimator},
we can construct an alternative estimator
\begin{align}
\MoveEqLeft
\E_u[Q^{\pi_\theta}(s,u) \psi(s,u)]\\
&=
\begin{aligned}[t]
  &Q^{\pi_\theta}(s,\low) \E_u [\I_{u \le \low} \nabla_\theta \log \Pi_\theta(\low|s)]\\
  &{}+\E_u [\I_{\low < u < \high} Q^{\pi_\theta}(s,u)\nabla_\theta \log \pi_\theta(u|s)]\\
  &{}+Q^{\pi_\theta}(s,\high) \E_u [\I_{\high \le u}\nabla_\theta \log \left(1- \Pi_\theta(\high|s)\right)]\\
\end{aligned}\\
&=\E_u [Q^{\pi_\theta}(s,u) \capgpsi(s,u)],
\label{eq:proposed_disjoint_estimator}
\end{align}
where
\begin{equation}
\capgpsi(s,u) =
\begin{cases}
    \nabla_\theta \log \Pi_\theta(\low|s)      & \text{if } u \le \low\\
    \nabla_\theta \log \pi_\theta(u|s)         & \text{if } \low < u < \high\\
    \nabla_\theta \log (1-\Pi_\theta(\high|s)) & \text{if } \high \le u
\end{cases}.
\label{eq:bar_psi}
\end{equation}

By \eqref{eq:proposed_disjoint_estimator} the policy gradient can be estimated using the sample average of $Q^{\pi_\theta}(s,u) \capgpsi(s,u)$.
This estimator, which we call clipped action policy gradient (CAPG), is better than the conventional estimator \eqref{eq:sample_based_estimation} in the sense that it has a lower variance while being unbiased.

The difference between the conventional estimator and CAPG comes from outside the action bounds.
CAPG replaces $\pi_\theta(u|s)$ of $\nabla_\theta \log \pi_\theta(u|s)$ with $\Pi_\theta(\low|s)$ and $1-\Pi_\theta(\high|s)$ at $u\le\low$ and $\high\le u$, respectively.
Intuitively speaking, because both $\Pi_\theta(\low|s)$ and $1-\Pi_\theta(\high|s)$ are deterministic given $s$, the variance should decrease.
In fact, this observation is true.

To show this, we need to decompose the variance.
The variance of a random variable $X$ that depends on $u$ can be decomposed as
\begin{align}
\V_u[X] ={} &\V_u [\I_{u \le \low} X]\! + \!\V_u [\I_{\low < u < \high} X]\! + \!\V_u [\I_{\high \le u} X]\\
 &{}-2\E_u [\I_{u \le \low} X] \E_u [\I_{\low < u < \high} X]\\
 &{}-2\E_u [\I_{\low < u < \high} X] \E_u [\I_{\high \le u} X]\\
 &{}-2\E_u [\I_{\high \le u} X] \E_u [\I_{u \le \low} X].
\label{eq:disjoint_variance}
\end{align}
Let us compare each term of the right-hand side between the cases $X=Q^{\pi_\theta}(s,u)\psi(s,u)$ and $X=Q^{\pi_\theta}(s,u) \capgpsi(s,u)$.
From Lemma~\ref{lemma:estimator}, we can see that the terms $\V_u [\I_{\low < u < \high} X]$, $\E_u [\I_{u \le \low} X], \E_u [\I_{\low < u < \high} X]$, and $\E_u [\I_{\high \le u} X] $ do not make any differences.
The following lemma shows that the difference arises only from the terms $\V_u [\I_{u \le \low} X]$ and $\V_u [\I_{\high \le u} X]$.
\begin{restatable}{lemma}{lemmavarianceinequality}
\label{lemma:variance_inequality}
Suppose $\pi_\theta(u|s)$ is a compatible conditional PDF of $u \in \R$ whose CDF is $\Pi_\theta(u|s)$.
Then, the following inequalities hold:
\begin{align}
\MoveEqLeft
\Var_u[\I_{u \le \low} \nabla_\theta \log \pi_\theta(u|s)]
\geq \Var_u[\I_{u \le \low} \nabla_\theta \log \Pi_\theta(\low|s)],
\\
\MoveEqLeft
\Var_u[\I_{\high \le u} \nabla_\theta \log \pi_\theta(u|s)]
\geq \Var_u[\I_{\high \le u} \nabla_\theta \log (1-\Pi_\theta(\high|s))].
\end{align}
The equalities hold only when $\nabla_\theta \log \pi_\theta(u|s)$ is constant over $u \le \low$ and $\high \le u$, respectively.
\end{restatable}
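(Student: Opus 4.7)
The plan is to reduce both inequalities to a single application of the Cauchy--Schwarz inequality. I will describe the lower-tail case in detail; the upper-tail case follows by the same argument applied on $[\high,\infty)$ with $1-\Pi_\theta(\high|s)=\int_{\high}^{\infty}\pi_\theta(u|s)\,du$ in place of $\Pi_\theta(\low|s)$.

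First, I would invoke Lemma~\ref{lemma:estimator} to observe that the two random variables whose variances are being compared share the same expectation. Since $\I_{u\le\low}$ is idempotent and $\nabla_\theta\log\Pi_\theta(\low|s)$ does not depend on $u$, the variance inequality then reduces (coordinate-wise in $\theta$, following the paper's convention stated in the footnote) to the second-moment comparison
\begin{equation*}
\E_u\bigl[\I_{u\le\low}\,(\nabla_\theta\log\pi_\theta(u|s))^2\bigr] \;\ge\; \bigl(\nabla_\theta\log\Pi_\theta(\low|s)\bigr)^2\,\Pi_\theta(\low|s).
\end{equation*}

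Second, using the compatible-PDF property to exchange $\nabla_\theta$ with the integral over $(-\infty,\low]$, the score-function identity gives
\begin{equation*}
\Pi_\theta(\low|s)\,\nabla_\theta\log\Pi_\theta(\low|s) \;=\; \int_{-\infty}^{\low}\pi_\theta(u|s)\,\nabla_\theta\log\pi_\theta(u|s)\,du.
\end{equation*}
Applying Cauchy--Schwarz with weight $\pi_\theta(\cdot|s)$ on $(-\infty,\low]$ then bounds the square of the right-hand side above by $\Pi_\theta(\low|s)\cdot\int_{-\infty}^{\low}\pi_\theta(u|s)(\nabla_\theta\log\pi_\theta(u|s))^2\,du$; dividing through by $\Pi_\theta(\low|s)>0$ yields exactly the second-moment inequality. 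The degenerate case $\Pi_\theta(\low|s)=0$ is trivial, since both sides of the variance inequality vanish.

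For the equality condition, I would use the standard Cauchy--Schwarz equality case: equality holds iff $\nabla_\theta\log\pi_\theta(u|s)$ is constant in $u$ on $\{u\le\low\}$ almost surely under $\pi_\theta(\cdot|s)$, which is precisely the stated condition. I do not anticipate a substantive obstacle: the only nonroutine observation is that the log-derivative of the tail probability is itself the $\pi_\theta$-weighted average of the score over that tail, which is what sets up Cauchy--Schwarz; the remainder is bookkeeping around the indicator and the coordinate-wise variance convention.
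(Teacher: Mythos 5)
Your proposal is correct and takes essentially the same route as the paper's proof: reduce to a second-moment comparison via Lemma~\ref{lemma:estimator}, use the compatible-PDF exchange of derivative and integral to identify $\Pi_\theta(\low|s)\nabla_\theta\log\Pi_\theta(\low|s)$ with the tail integral of the score, and close with a Cauchy--Schwarz bound whose equality case gives the stated condition. The paper phrases the key step as $\E[X^2]\ge(\E[X])^2$ under the normalized truncated density $\pi_\theta(u|s)/\Pi_\theta(\low|s)$ rather than as unnormalized Cauchy--Schwarz, but these are the same inequality with the same equality case.
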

See the appendix for the proof.

Combining Lemma~\ref{lemma:estimator} and Lemma~\ref{lemma:variance_inequality} leads to the following result.
\begin{lemma}
\label{lemma:CAPG}
Suppose $\pi_\theta(u|s)$ is a compatible conditional PDF of $u \in \R$ whose CDF is $\Pi_\theta(u|s)$.
Let $f(s,u)$ be a real-valued function such that
\begin{equation}
f(s,u) =
\begin{cases}
    f(s,\low) & \text{if } u \le \low\\
    f(s,u)    & \text{if } \low < u < \high\\
    f(s,\high)& \text{if } \high \le u
\end{cases}.
\end{equation}
Define $\psi(s,u) = \nabla_\theta \log \pi_\theta(u|s)$ and $\capgpsi(s,u)$ as \eqref{eq:bar_psi}.
Then, the following equality and inequality hold:
\begin{align}
  \E_u[f(s,u)\capgpsi(s,u)] =   \E_u[f(s,u)\psi(s,u)],\\
  \V_u[f(s,u)\capgpsi(s,u)] \le \V_u[f(s,u)\psi(s,u)].
\end{align}
The equality of the variances holds only when $\psi(s,u)$ is constant over both $u \le \low$ and $\high \le u$.
\end{lemma}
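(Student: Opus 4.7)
The plan is to decompose the expectation and variance of $f(s,u)\psi(s,u)$ and $f(s,u)\capgpsi(s,u)$ over the three regions $\{u\le\low\}$, $\{\low<u<\high\}$, $\{\high\le u\}$ using \eqref{eq:disjoint_mean} and \eqref{eq:disjoint_variance}, then compare the two term by term. The key structural fact driving everything is that $f(s,u)$ is constant in $u$ on each of the outer two regions (equal to $f(s,\low)$ and $f(s,\high)$ respectively), so on those regions the factor $f$ pulls out of both $\E_u$ and $\V_u$.

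First I would handle the equality of expectations. On $\{\low<u<\high\}$, $\capgpsi=\psi$ by definition \eqref{eq:bar_psi}, so the middle terms agree trivially. On $\{u\le\low\}$, using the constancy of $f$ and Lemma~\ref{lemma:estimator} I get
\begin{align*}
\E_u[\I_{u\le\low} f(s,u)\psi(s,u)]
&= f(s,\low)\,\E_u[\I_{u\le\low}\nabla_\theta\log\pi_\theta(u|s)]\\
&= f(s,\low)\,\E_u[\I_{u\le\low}\nabla_\theta\log\Pi_\theta(\low|s)]\\
&= \E_u[\I_{u\le\low} f(s,u)\capgpsi(s,u)],
\end{align*}
and the $\{\high\le u\}$ piece is analogous. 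Summing the three pieces via \eqref{eq:disjoint_mean} gives the expectation equality.

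Next I would compare the six terms in \eqref{eq:disjoint_variance} for $X_1=f\psi$ and $X_2=f\capgpsi$. The three cross expected-product terms match because the corresponding one-region expectations already match by the previous step. The middle variance term matches since $\psi=\capgpsi$ on $\{\low<u<\high\}$. For the remaining two variance terms, constancy of $f$ on the outer regions lets me factor $f(s,\low)^2$ and $f(s,\high)^2$ out, and then Lemma~\ref{lemma:variance_inequality} gives
\begin{align*}
\V_u[\I_{u\le\low} f(s,u)\psi(s,u)]
&= f(s,\low)^2\,\V_u[\I_{u\le\low}\nabla_\theta\log\pi_\theta(u|s)]\\
&\ge f(s,\low)^2\,\V_u[\I_{u\le\low}\nabla_\theta\log\Pi_\theta(\low|s)]\\
&= \V_u[\I_{u\le\low} f(s,u)\capgpsi(s,u)],
\end{align*}
with the analogous inequality on $\{\high\le u\}$. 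Adding up via \eqref{eq:disjoint_variance} then yields $\V_u[f\capgpsi]\le\V_u[f\psi]$. The equality condition follows directly from the equality clause in Lemma~\ref{lemma:variance_inequality} applied to both outer regions.

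I do not anticipate a substantive obstacle: once the region-wise constancy of $f$ is exploited to pull $f(s,\low)$ and $f(s,\high)$ outside the operators, this is essentially bookkeeping on top of Lemmas~\ref{lemma:estimator} and \ref{lemma:variance_inequality}. The one place to be careful is checking that the three cross terms in the variance decomposition really do coincide between $X_1$ and $X_2$; this reduces to the already-proven equality of one-region expectations, so no extra work is required.
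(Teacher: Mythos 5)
Your proposal is correct and follows essentially the same route as the paper, which proves this lemma implicitly in the discussion preceding its statement: decompose $\E_u$ and $\V_u$ over the three regions via \eqref{eq:disjoint_mean} and \eqref{eq:disjoint_variance}, note that the middle terms and all cross expectation products coincide by Lemma~\ref{lemma:estimator}, and reduce the two outer variance terms to Lemma~\ref{lemma:variance_inequality} after pulling the region-wise constant $f(s,\low)$ and $f(s,\high)$ out of the operators. Your write-up is in fact slightly more explicit than the paper's, since it spells out the factorization of $f$ that the paper leaves implicit.
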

Lemma~\ref{lemma:CAPG} shows that both \eqref{eq:e_condition} and \eqref{eq:v_condition} are satisfied when $Y=Q^{\pi_\theta}(s,u) \capgpsi(s,u)$ and $X=Q^{\pi_\theta}(s,u) \psi(s,u)$.
Therefore, we can conclude that CAPG has a lower variance than the conventional estimator while being unbiased.

\subsection{Vector actions}
\label{subsec:vector_action}

The results in the previous subsection can be extended to the case of vector actions, $\vec{u} \in \R^d$ where $d \ge 2$, as long as the elements of $\vec{u}$ are conditionally independent given $s$, i.e., the PDF can be factored as
\begin{equation}
  \pi(\vec{u}|s) = \pi_\theta^{(1)}(u_1|s) \pi_\theta^{(2)}(u_2|s) \cdots\pi_\theta^{(d)}(u_d|s),
  \label{eq:factor_pdf}
\end{equation}
where $u_i$ denotes the $i$-th element of $\vec{u}$, and $\pi^{(i)}_\theta$ denotes its corresponding conditional PDF.
A typical example of such a policy is a multivariate Gaussian policy with a diagonal covariance.

\begin{restatable}{lemma}{lemmavectoraction}
\label{lemma:vector_action}
Suppose $\pi_\theta(\vec{u}|s)$ is a conditional PDF of $\vec{u} \in \R^d$ ($d \ge 2$) whose CDF is $\Pi_\theta(\vec{u}|s)$.
The conditional PDF and CDF of $u_i$ are denoted by $\pi^{(i)}_\theta$ and $\Pi^{(i)}_\theta$, respectively.
Suppose each $\pi^{(i)}_\theta$ is compatible and the elements of $\vec{u}$ are conditionally independent given $s$.
Let $f(s,\vec{u})$ be a real-valued function such that $f(s,\vec{u}) = f(s,\clip(\vec{u},\low,\high))$.
Define $\psi(s,\vec{u}) = \sum_i \psi^{(i)}(s,u_i)$, where $\psi^{(i)}(s,u) = \nabla_\theta \log \pi^{(i)}_\theta(u|s)$.
Similarly, define $\capgpsi(s,\vec{u}) = \sum_i \capgpsi^{(i)}(s,u_i)$, where
\begin{equation}
\capgpsi^{(i)}(s,u) =
\begin{cases}
    \nabla_\theta \log \Pi_\theta^{(i)}(\low|s)      & \text{if } u \le \low\\
    \nabla_\theta \log \pi_\theta^{(i)}(u|s)         & \text{if } \low < u < \high\\
    \nabla_\theta \log (1-\Pi_\theta^{(i)}(\high|s)) & \text{if } \high \le u
\end{cases}.
\end{equation}
Then, the following equality and inequality hold:
\begin{align}
  \E_{\vec{u}}[f(s,\vec{u})\capgpsi(s,\vec{u})] =   \E_{\vec{u}}[f(s,\vec{u})\psi(s,\vec{u})], \label{eq:vector_capg_equality} \\
  \V_{\vec{u}}[f(s,\vec{u})\capgpsi(s,\vec{u})] \le \V_{\vec{u}}[f(s,\vec{u})\psi(s,\vec{u})]. \label{eq:vector_capg_inequality}
\end{align}
The equality of the variances holds only when $\psi^{(i)}(s,u)$ is constant over both $u \le \low$ and $\high \le u$ for all $1 \le i \le d$.
\end{restatable}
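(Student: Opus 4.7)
The plan is to reduce the vector case to the scalar Lemma~\ref{lemma:CAPG} by swapping one coordinate at a time. For $k=0,1,\dots,d$ I would define the hybrid estimator
\[
\phi_k(s,\vec u) \;=\; \sum_{i\le k} \capgpsi^{(i)}(s,u_i) + \sum_{i>k} \psi^{(i)}(s,u_i),
\]
so that $\phi_0 = \psi$ and $\phi_d = \capgpsi$. It is then enough to show, for every $k$, that $\E_{\vec u}[f\phi_k]=\E_{\vec u}[f\phi_{k-1}]$ and $\V_{\vec u}[f\phi_k]\le\V_{\vec u}[f\phi_{k-1}]$; chaining over $k$ gives \eqref{eq:vector_capg_equality} and \eqref{eq:vector_capg_inequality}. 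Throughout, I would exploit the conditional independence of the $u_i$ to freeze $\vec u_{-k}$ and treat the relevant quantity as a function of the single variable $u_k$, so that Lemma~\ref{lemma:CAPG} applies in its scalar form.

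For the mean equality at step $k$, fix $\vec u_{-k}$ and view $g(u_k) := f(s,\vec u)$ as a scalar function of $u_k$. The hypothesis $f(s,\vec u)=f(s,\clip(\vec u,\low,\high))$, combined with the idempotence of componentwise clipping, implies that $g$ satisfies the clipping condition required by Lemma~\ref{lemma:CAPG}. Its equality part yields $\E_{u_k}[g\,\capgpsi^{(k)}(s,u_k)] = \E_{u_k}[g\,\psi^{(k)}(s,u_k)]$, and integrating over $\vec u_{-k}$ gives $\E_{\vec u}[f(\capgpsi^{(k)}-\psi^{(k)})] = 0$, which is exactly $\E[f\phi_k]=\E[f\phi_{k-1}]$ since $\phi_k-\phi_{k-1}=\capgpsi^{(k)}-\psi^{(k)}$.

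For the variance inequality at step $k$, since the means agree I can write $\V[f\phi_k]-\V[f\phi_{k-1}] = \E[f^2(\phi_k-\phi_{k-1})(\phi_k+\phi_{k-1})]$. Decomposing $\phi_k+\phi_{k-1} = R + \capgpsi^{(k)}+\psi^{(k)}$, where $R$ collects terms of the form $\capgpsi^{(i)}$ or $\psi^{(i)}$ with $i\neq k$ and therefore depends only on $\vec u_{-k}$, the difference splits into a cross piece $\E[f^2 R(\capgpsi^{(k)}-\psi^{(k)})]$ and a diagonal piece $\E[f^2((\capgpsi^{(k)})^2-(\psi^{(k)})^2)]$. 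Conditioning on $\vec u_{-k}$, the cross piece vanishes by the equality part of Lemma~\ref{lemma:CAPG} applied with $g=f^2 R$, which inherits the clipping property in $u_k$ because $R$ does not depend on $u_k$. For the diagonal piece, Lemma~\ref{lemma:CAPG} with $g=f$ gives $\V_{u_k}[f\capgpsi^{(k)}]\le\V_{u_k}[f\psi^{(k)}]$ together with equal means, whence $\E_{u_k}[f^2(\capgpsi^{(k)})^2]\le\E_{u_k}[f^2(\psi^{(k)})^2]$; integrating over $\vec u_{-k}$ makes the diagonal piece non-positive, so $\V[f\phi_k]\le\V[f\phi_{k-1}]$.

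The equality case follows because each single-coordinate step is strict unless the scalar equality condition of Lemma~\ref{lemma:CAPG} holds on both tails, so overall equality forces $\psi^{(i)}(s,u)$ to be constant on $u\le\low$ and on $\high\le u$ for every $i$. The main obstacle in this plan is the variance step: variance does not distribute over sums, so one cannot simply invoke the scalar inequality coordinatewise. The telescoping through $\phi_k$ is the key device, since it is tailored so that the off-diagonal interactions factor through the $u_k$-independent quantity $R$, at which point the scalar lemma dispatches both the cross and the diagonal contributions.
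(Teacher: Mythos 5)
Your argument is correct, and it reaches the conclusion by a genuinely different route from the paper. The paper works with the full sums directly: it expands $\V_{\vec{u}}[f\psi]$ and $\V_{\vec{u}}[f\capgpsi]$ into per-coordinate variances plus pairwise covariances, shows each per-coordinate variance decreases, and then shows every covariance term is \emph{exactly preserved} by a double nested conditioning (first on $\vec{u}_{\setminus i,j}$, then on $u_i$, then on $u_j$) together with the equality part of Lemma~\ref{lemma:CAPG}. You instead run a Lindeberg-style swapping argument through the hybrids $\phi_k$, so that each step changes only one coordinate; the second-moment difference then splits into a cross piece $\E[f^2 R(\capgpsi^{(k)}-\psi^{(k)})]$, killed by the equality part of the scalar lemma applied conditionally with the $u_k$-independent factor $R$, and a diagonal piece controlled by the inequality part. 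The two proofs rest on the same two ingredients --- conditional applicability of the scalar lemma (which needs exactly the observation you make, that $f$ and $f^2R$ inherit the clipping property in $u_k$ once $\vec{u}_{\setminus k}$ is frozen), the equality part for interactions, and the inequality part for the quadratic terms --- but your telescoping requires only a single level of conditioning per step and sidesteps the covariance bookkeeping entirely, whereas the paper's decomposition makes explicit the structural fact that CAPG leaves all cross-covariances unchanged and lowers only the marginal variances. One cosmetic caveat: your equality-case discussion, like the paper's, tacitly assumes the relevant values of $f$ are nonzero on the tails when converting ``each step is an equality'' into ``each $\psi^{(i)}$ is constant on the tails''; this matches the precision of the paper's own statement, so it is not a gap relative to what is being proved.
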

See the appendix for the proof.

\subsection{Implementation}
\label{subsec:implementation}

CAPG can be easily incorporated into existing policy gradient-based algorithms.
We only have to replace the computation of $\psi(s,u)$ with that of $\capgpsi(s,u)$ to use CAPG.
When $\psi(s,u)$ is computed using an automatic differentiation tool, we can instead replace $\log \pi_\theta(u|s)$ with
\begin{equation}
\begin{aligned}
    \log \Pi_\theta(\low|s)      && &\text{if } u \le \low\\
    \log \pi_\theta(u|s)         && &\text{if } \low < u < \high\\
    \log (1-\Pi_\theta(\high|s)) && &\text{if } \high \le u
\end{aligned}.
\end{equation}

\subsection{Extensions}
\label{subsec:extensions}

Although we have used standard notations of MDPs, our results do not rely on the Markov property.
CAPG works as an unbiased and lower-variance policy gradient estimator in non-Markovian environments as well, in the same way that the REINFORCE algorithm \cite{Williams1992} works in such environments.

We assumed \eqref{eq:clipped_reward} so that $Q^\pitheta(s,u)$ becomes constant outside the action bounds.
However, sometimes it makes sense to use a reward function that depends on out-of-bound actions even when the state-transition dynamics does not, e.g., to penalize the norm of actions to prevent the policy from going too far out of the bounds.
With such a reward function, \eqref{eq:disjoint_q} no longer holds.
Instead, we can use the recursive structure of $Q^{\pi_\theta}(s,u)$ to obtain
\begin{align}
\MoveEqLeft
\E_u[Q^{\pi_\theta}(s,u)\psi(s,u)]\\
&=
\begin{aligned}[t]
&\E_u[r(s,u) \psi(s,u)]\\
&+\E_u[\gamma \E_{s',u'}[Q^{\pi_\theta}(s',u')] \psi(s,u)]],
\end{aligned}
\label{eq:separate_reward_from_q}
\end{align}
where $\E_{s',u'}[\cdot]$ denotes an expected value with respect to $s' \sim P(s,\clip(u,\low,\high),\cdot)$, $u' \sim \pitheta(\cdot|s')$.
We can apply CAPG to the second term of the right-hand side of \eqref{eq:separate_reward_from_q} because $\gamma \E_{s',u'}[Q^{\pi_\theta}(s',u')]$ only depends on $u$ via $\clip(\cdot,\low,\high)$.

\subsection{Clipped distribution}
\label{subsec:clipped_distribution}

So far we have derived CAPG as a better policy gradient estimator.
We now argue that CAPG can be interpreted as estimating the policy gradient of a transformed policy.

Given a policy $\pi_\theta$ and action bounds $[\low,\high]$, we can consider a policy $\clippedpi_\theta$ modeled as a probability distribution with bounded support whose CDF is defined as
$\clippedPi_\theta(u|s) = \I_{\low \le u < \high}\Pi_\theta(u|s) + \I_{\high \le u}$,
which is a mixture of two degenerate distributions at \{$\low$, $\high$\} and a truncated version of $\pitheta$.
The corresponding PDF with respect to the measure generated by the mixture
\footnote{
The probability measure $P$ corresponding to $\clippedPi_\theta(u|s)$, defined over the measurable space $([\low,\high], \Borel([\low,\high]))$, is such that $P \ll \lambda + \delta_\low + \delta_\high$, where $\Borel$ is the Borel $\sigma$-algebra, $\lambda$ is the Lebesgue measure, and $\delta_x$ is a Dirac measure at $x$.
}
is given by
\begin{equation}
\clippedpi_\theta(u|s) =
\begin{cases}
    \Pi_\theta(\low|s)  & \text{if } u = \low \\
    \pi_\theta(u|s)     & \text{if } \low < u < \high \\
    1 - \Pi_\theta(\high|s) & \text{if } u = \high
\end{cases}.
\label{eq:pi_c_pdf}
\end{equation}
We call this distribution a clipped distribution.
Seeing that $\nabla_\theta \log \clippedpi_\theta(u|s) = \capgpsi(s,u)$ for $u \in [\low,\high]$, CAPG applied to $\pi_\theta$ is, in fact, estimating the policy gradient of $\clippedpi_\theta$.
If we see Gaussian policies used with action bounds as clipped Gaussian policies, then CAPG is the straightforward policy gradient estimator for them, whereas the conventional estimator has an unnecessarily high variance.

While a clipped distribution resembles a truncated distribution, they are different.
A clipped distribution can be multimodal even when its underlying distribution is unimodal because it puts the probability mass at the action bounds.
In contrast, a truncated distribution is always unimodal when its underlying distribution is unimodal.
This makes a difference in their representational powers to model policies.

\section{Experiments}

In this section, we evaluate the performance of CAPG compared to the conventional policy gradient estimator, which we call PG, in problems with action bounds.

\subsection{Continuum-armed bandit problems}

\begin{figure*}[!t]
  \centering
  \includegraphics[width=0.24\hsize]{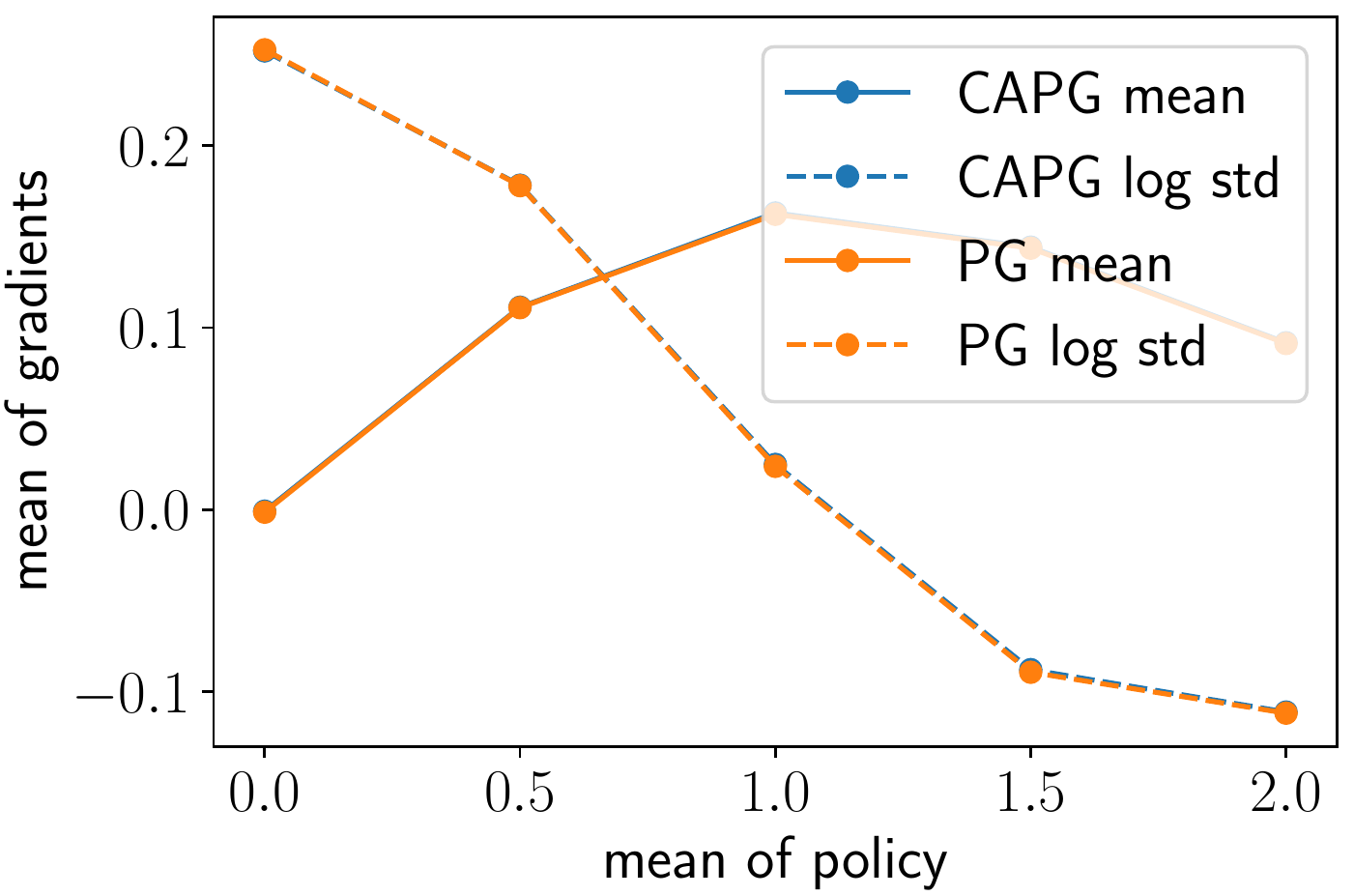}
  \includegraphics[width=0.24\hsize]{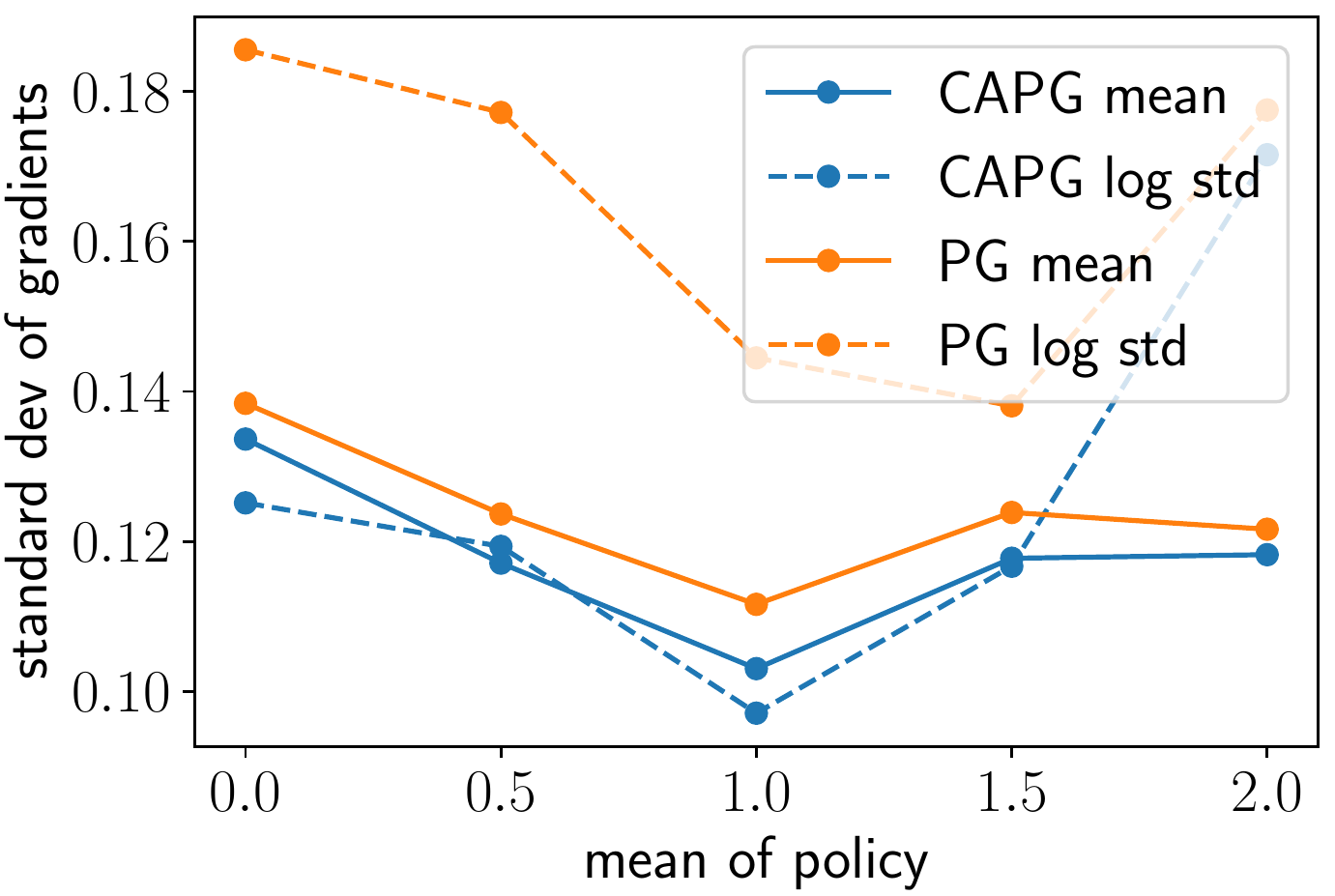}
  \includegraphics[width=0.24\hsize]{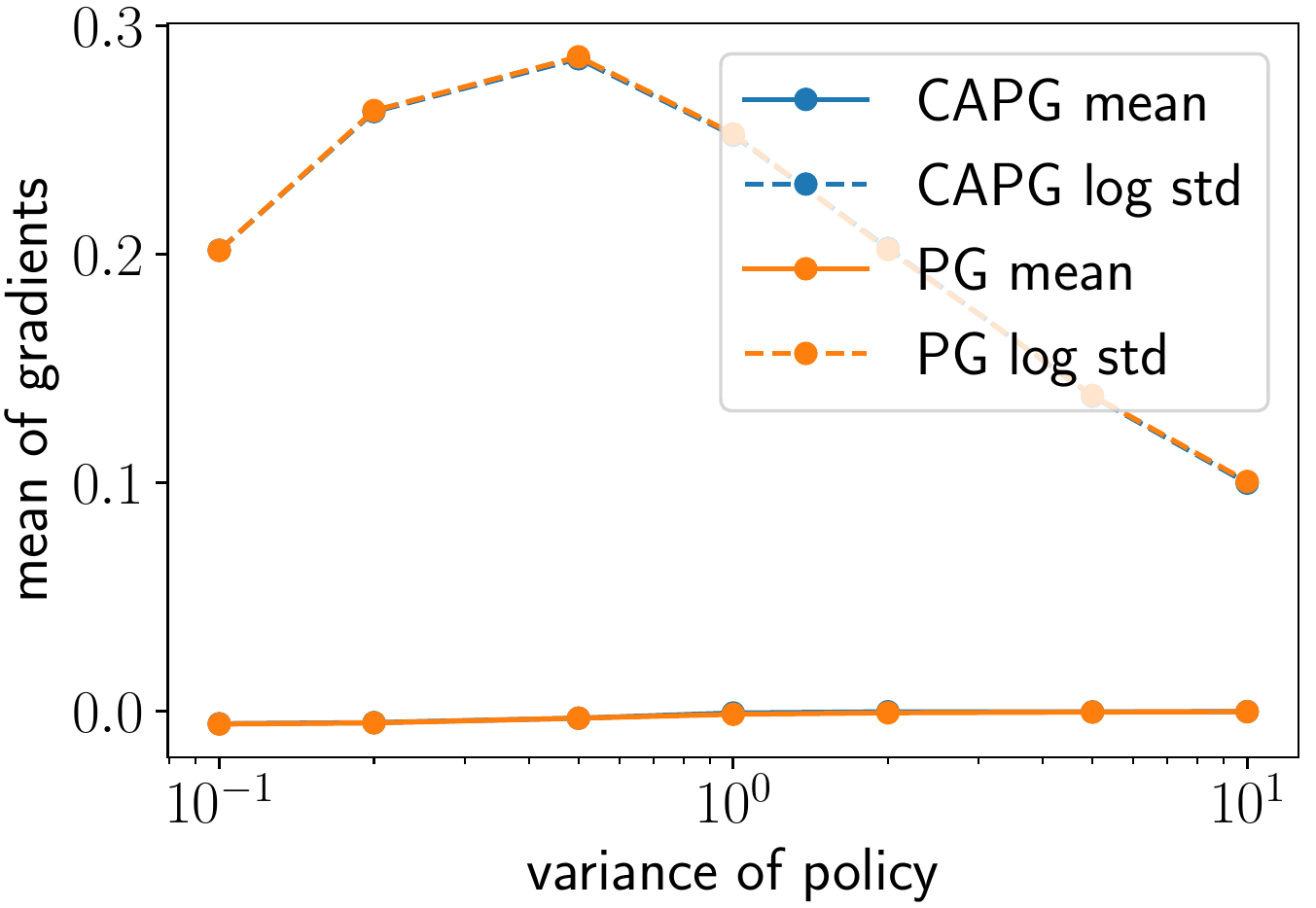}
  \includegraphics[width=0.24\hsize]{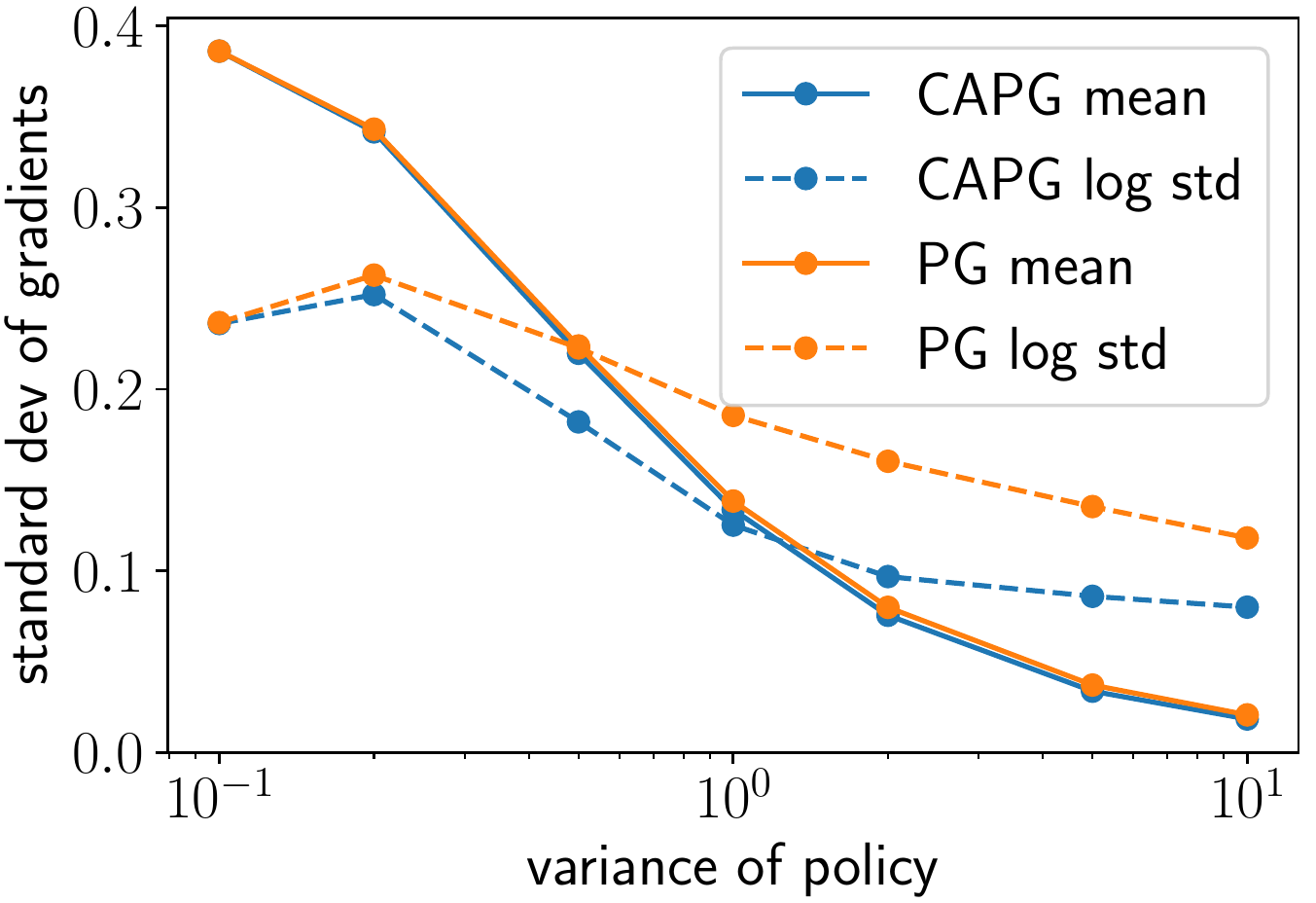}
  \caption{
Means and standard deviations of policy gradient estimates obtained using CAPG and PG on a continuum-armed bandit problem with a fixed policy of varying means (left half) and variances (right half).
For each data point, policy gradients with respect to $\theta_\mu$ and $\theta_\Sigma$ are estimated 10,000 times using 10,000 different batches of 5 (action, reward) pairs.
The CAPG and PG plots of the means of gradients almost overlap each other, and hence, only the PG plots are visible.
}
  \label{fig:stdev_grad}
\end{figure*}
\begin{figure*}[!t]
  \centering
  \includegraphics[width=0.24\hsize]{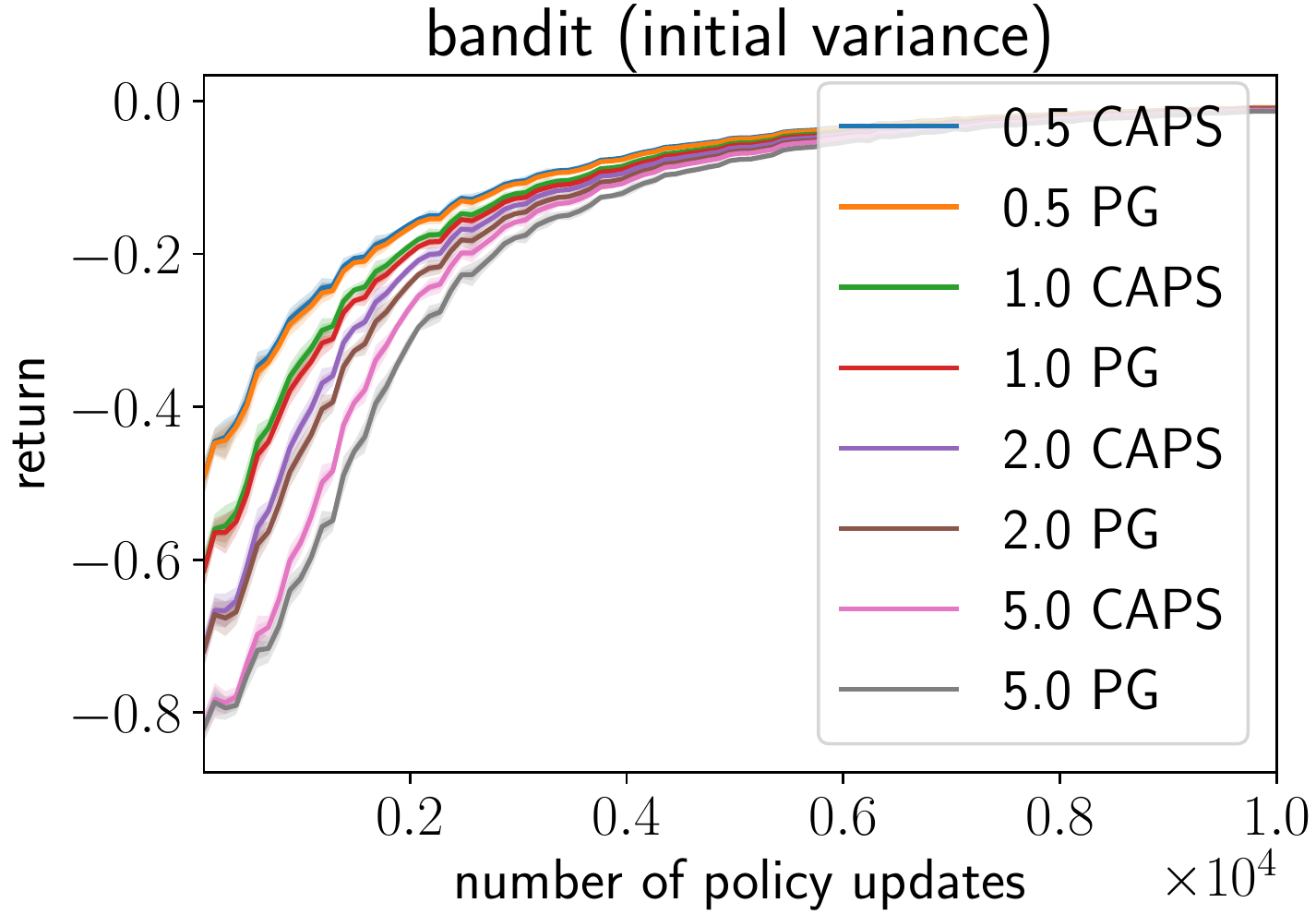}
  \includegraphics[width=0.24\hsize]{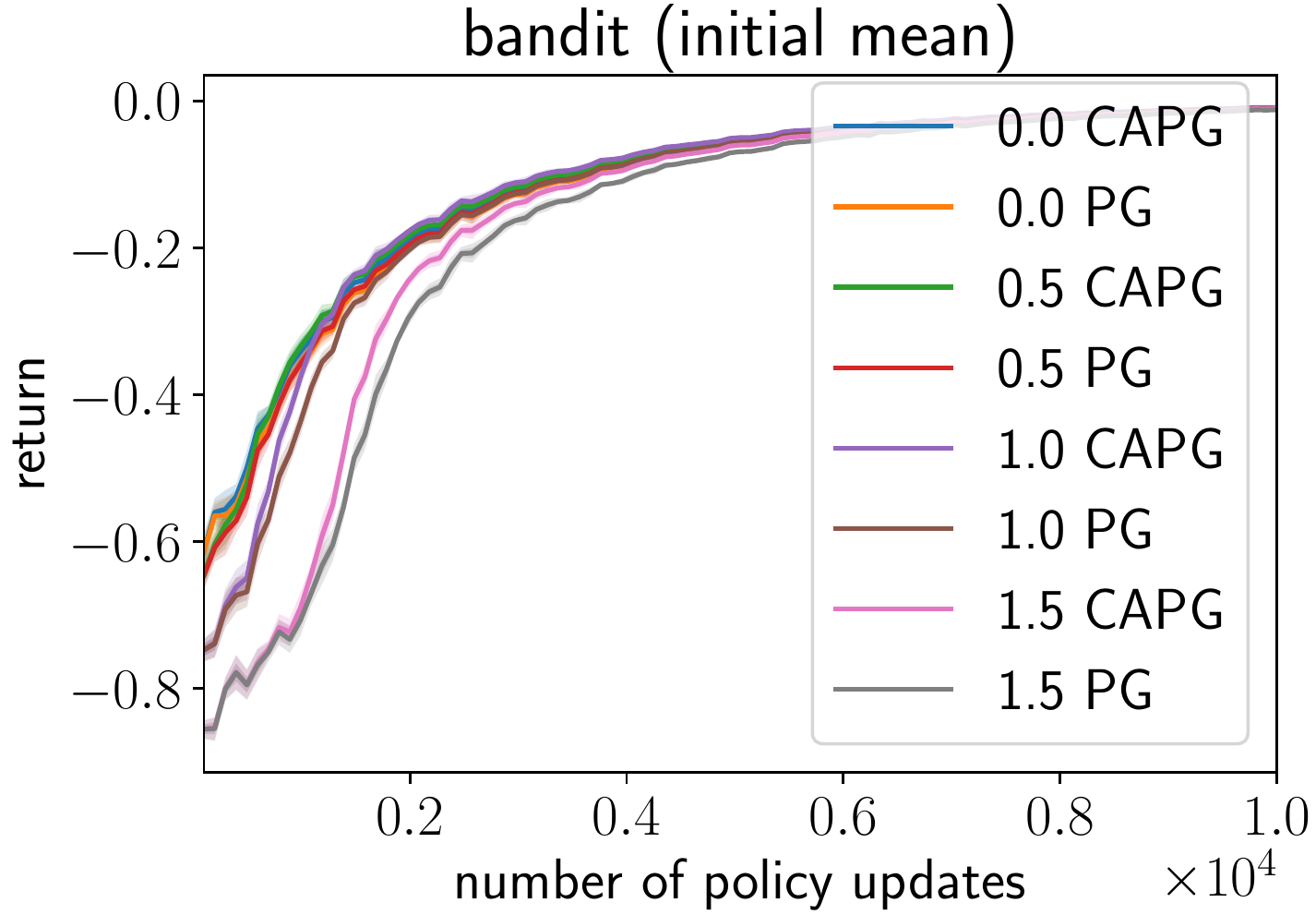}
  \includegraphics[width=0.24\hsize]{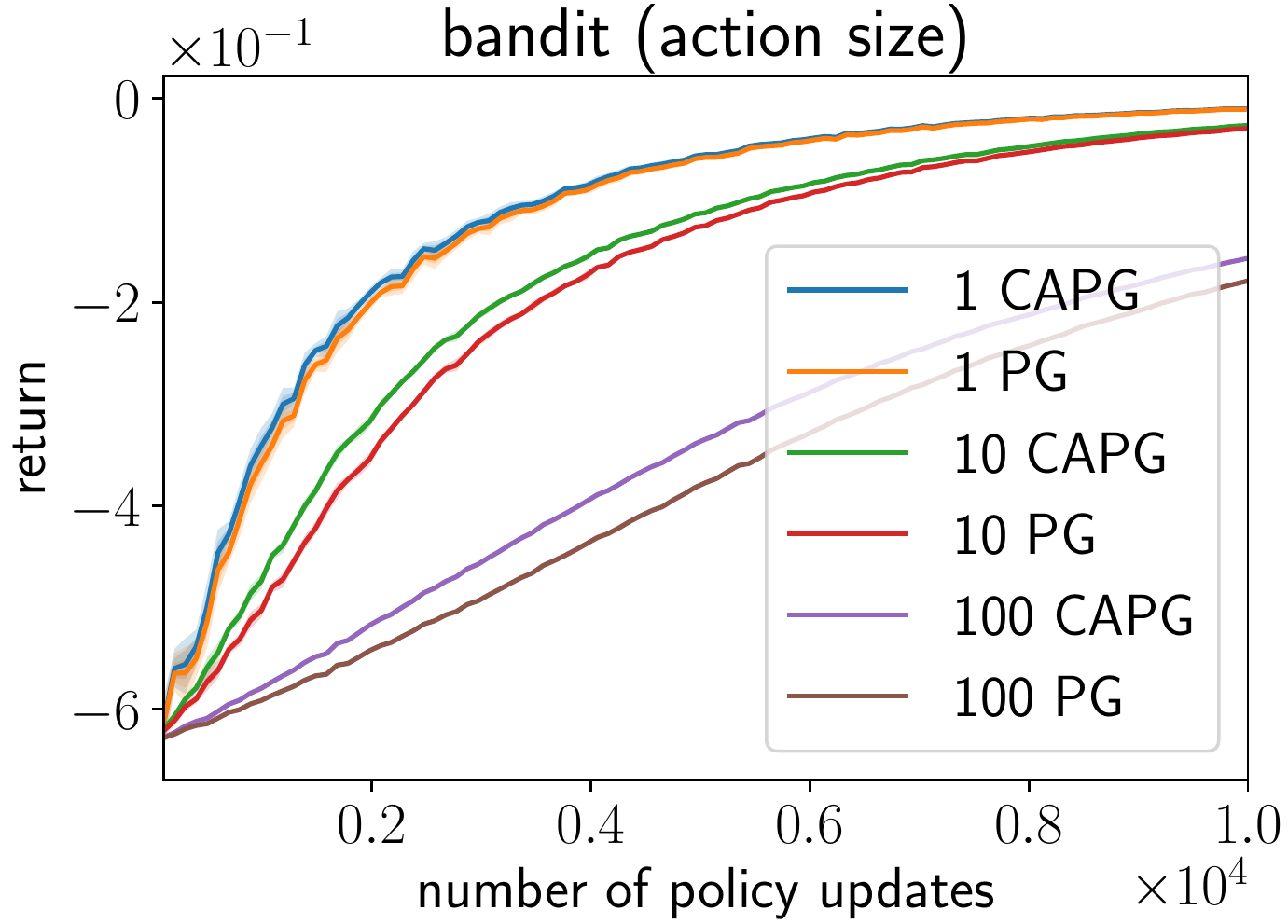}
  \includegraphics[width=0.24\hsize]{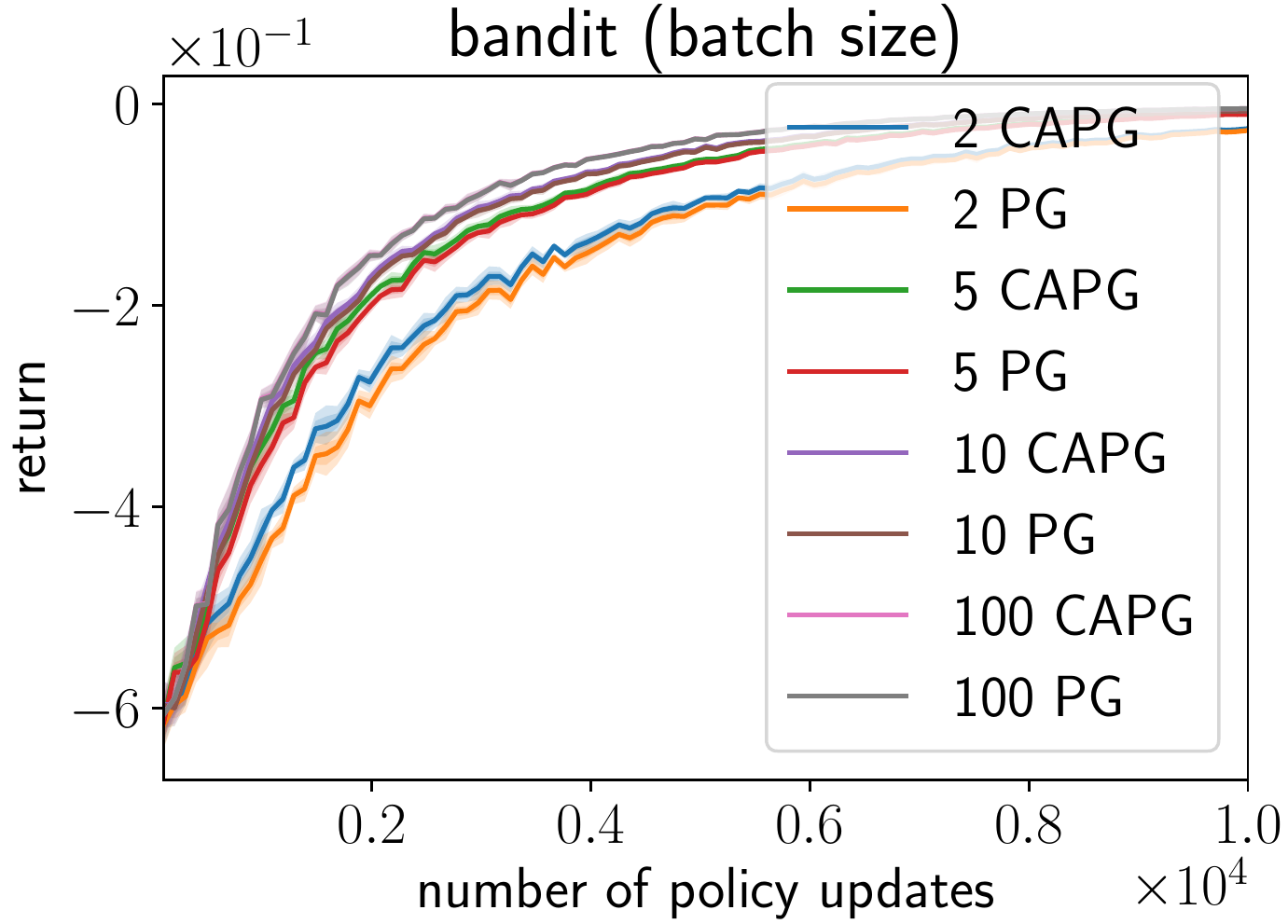}
  \caption{Training curves on continuum-armed bandit problems with four different aspects separately controlled: (from left to right) variance of the initial policy, mean of the initial policy, number of dimensions of actions, and batch size.
For each run, the last reward before every policy update is sampled and then averaged over the previous 100 updates to obtain a smoothed curve.
The smoothed curves are then averaged to compute the mean curves with 68\% and 95\% bootstrapped confidence intervals, which are indicated by the shaded areas.
}
  \label{fig:bandit_curves}
\end{figure*}

To demonstrate how CAPG works and how it interacts with each aspect of problems separately, we used continuum-armed bandit problems \cite{Agrawal1995}, i.e., MDPs with continuous action spaces and no state transitions.
State-independent policies were optimized by policy gradients to maximize action-dependent immediate rewards.

The action space was $[-1,1]^d, d \ge 1$ and the reward function was defined as
$r(u) = -\frac{1}{d} \sum_i |u_i|$
so that only choosing the optimal action of zeros achieves the maximum, zero reward.

Each policy was modeled as a multivariate Gaussian distribution with a diagonal covariance matrix and parameterized by $\theta = \{\theta_\mu, \theta_\Sigma\}$, where $\theta_\mu \in \R^d$ is the mean vector and $\theta_\Sigma \in \R^d$ is the main diagonal of the covariance matrix.

The following experimental settings were used unless otherwise stated.
Actions were scalars, i.e., $d=1$.
The parameters of a policy were initialized as zero mean and unit variance for each dimension.
Each policy update used a batch of 5 (action, reward) pairs.
The average reward in a batch was used as a baseline that was subtracted from each reward.
Adam \cite{Kingma2015b} with its default hyperparameters was used to update the parameters.

To quantify the variance reduction achieved by CAPG, we repeatedly estimated policy gradients using new samples without updating a policy.
Figure~\ref{fig:stdev_grad} shows the mean and standard deviation of policy gradient estimates obtained by CAPG and PG with a fixed policy of varying means and variances.
For both $\theta_\mu$ and $\theta_\Sigma$ in all settings, CAPG consistently achieved lower variance than PG without introducing visible bias.
These results numerically corroborate CAPG's variance reduction ability as well as its unbiasedness.
The efficacy of CAPG diminished at $\sigma^2=0.1$, where sampled actions rarely go outside the bounds.

Figure~\ref{fig:bandit_curves} shows the training curves of CAPG and PG with four different aspects separately controlled: variance of the initial policy, mean of the initial policy, number of dimensions of actions, and batch size.
Each configuration is evaluated with 10 different random seeds.
CAPG consistently achieved faster learning across the settings.
A larger initial variance and a more distant initial mean tend to make the gap more visible.
CAPG's gain scales even for 100 dimensions, implying its utility for more challenging, complex continuous control tasks.
Using smaller batch sizes benefits more from CAPG, and this is expected because smaller batch sizes are more affected by the variance of gradient estimation.
With the batch size of 100, the training curve of CAPG is difficult to distinguish from that of PG.
It should be noted that in these experiments all the actions are sampled from the same state.
In practical model-free RL scenarios, more than one action cannot be sampled from the same state.

\subsection{Simulated control problems}

\begin{table}
  \small
  \centering
  \begin{tabular}{lll}
  \toprule
  {} & Obs. space &        Action space \\
  \midrule
  InvertedPendulum-v1       &         $\R^{4}$ &   $[-3.0, 3.0]^{1}$ \\
  InvertedDoublePendulum-v1 &        $\R^{11}$ &   $[-1.0, 1.0]^{1}$ \\
  Reacher-v1                &        $\R^{11}$ &   $[-1.0, 1.0]^{2}$ \\
  Hopper-v1                 &        $\R^{11}$ &   $[-1.0, 1.0]^{3}$ \\
  HalfCheetah-v1            &        $\R^{17}$ &   $[-1.0, 1.0]^{6}$ \\
  Swimmer-v1                &         $\R^{8}$ &   $[-1.0, 1.0]^{2}$ \\
  Walker2d-v1               &        $\R^{17}$ &   $[-1.0, 1.0]^{6}$ \\
  Ant-v1                    &       $\R^{111}$ &   $[-1.0, 1.0]^{8}$ \\
  Humanoid-v1               &       $\R^{376}$ &  $[-0.4, 0.4]^{17}$ \\
  HumanoidStandup-v1        &       $\R^{376}$ &  $[-0.4, 0.4]^{17}$ \\
  \bottomrule
  \end{tabular}
  \caption{MuJoCo-simulated environments used in the experiments and their observation and action spaces.}
  \label{tab:mujoco_envs}
\end{table}

To evaluate CAPG's effectiveness in more practical settings, we used the following two popular deep RL algorithms for continuous control:
\begin{itemize}
  \item Proximal policy optimization (PPO) with clipped surrogate objective \cite{Schulman2017b}
  \item Trust region policy optimization (TRPO) \cite{Schulman2015e} with generalized advantage estimation (GAE) \cite{Schulman2016}.
\end{itemize}
For each of the two algorithms, we implemented the variant that uses CAPG as well as the original one that uses PG.
The only difference between these two is whether CAPG or PG is used.

For our experiments, we used 10 MuJoCo-simulated environments implemented in OpenAI Gym that are widely used as benchmark tasks for deep RL algorithms \cite{Schulman2017b, Henderson2017a, Ciosek2017a, Gu2017b, Duan2016, baselines}.
The names of the environments are listed along with their observation and action spaces in Table~\ref{tab:mujoco_envs}.
All the environments have bounded action spaces; hence, actions are clipped before being sent to the environments.

\begin{figure*}[!t]
  \centering
  \includegraphics[width=0.196\textwidth]{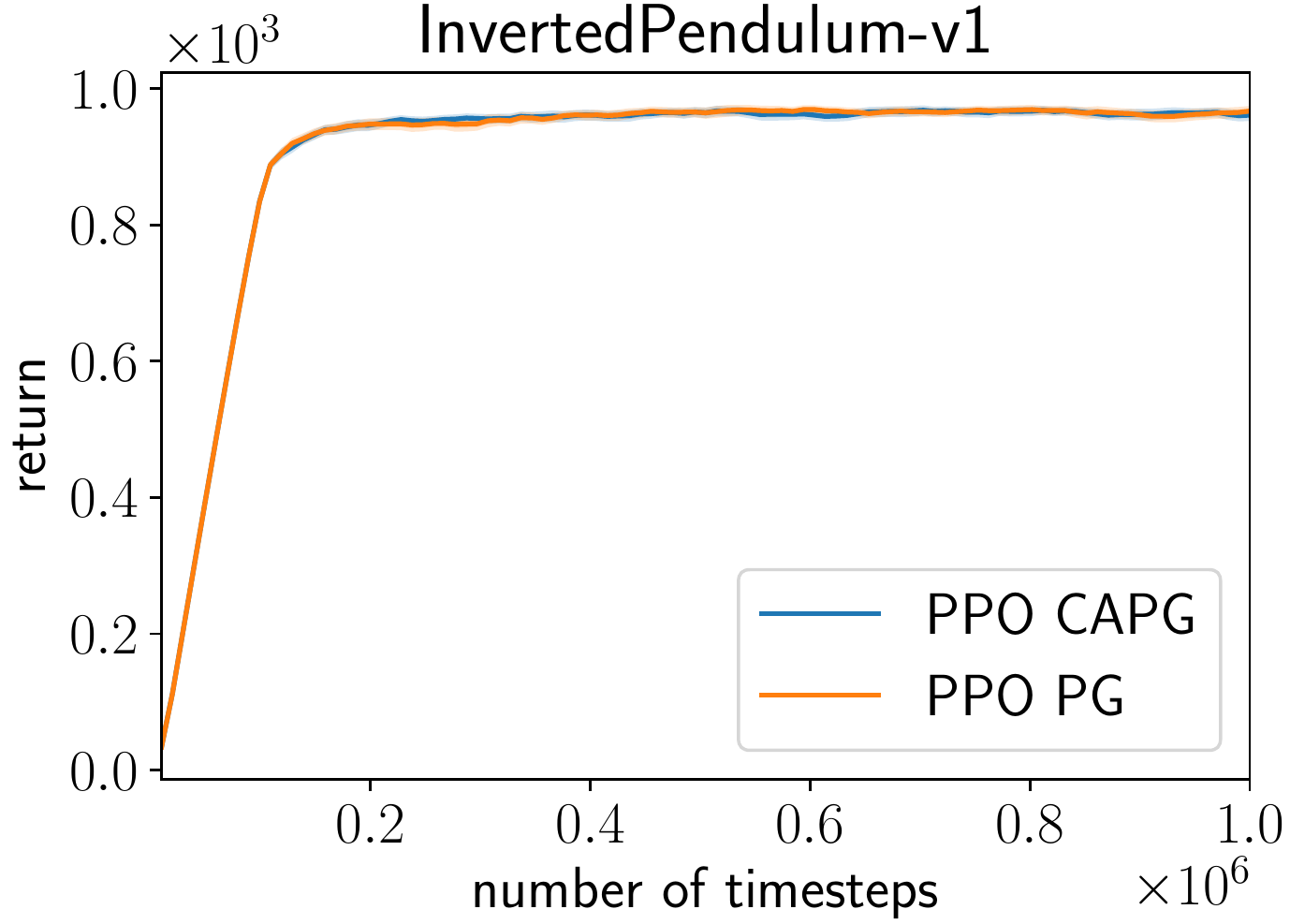}
  \includegraphics[width=0.196\textwidth]{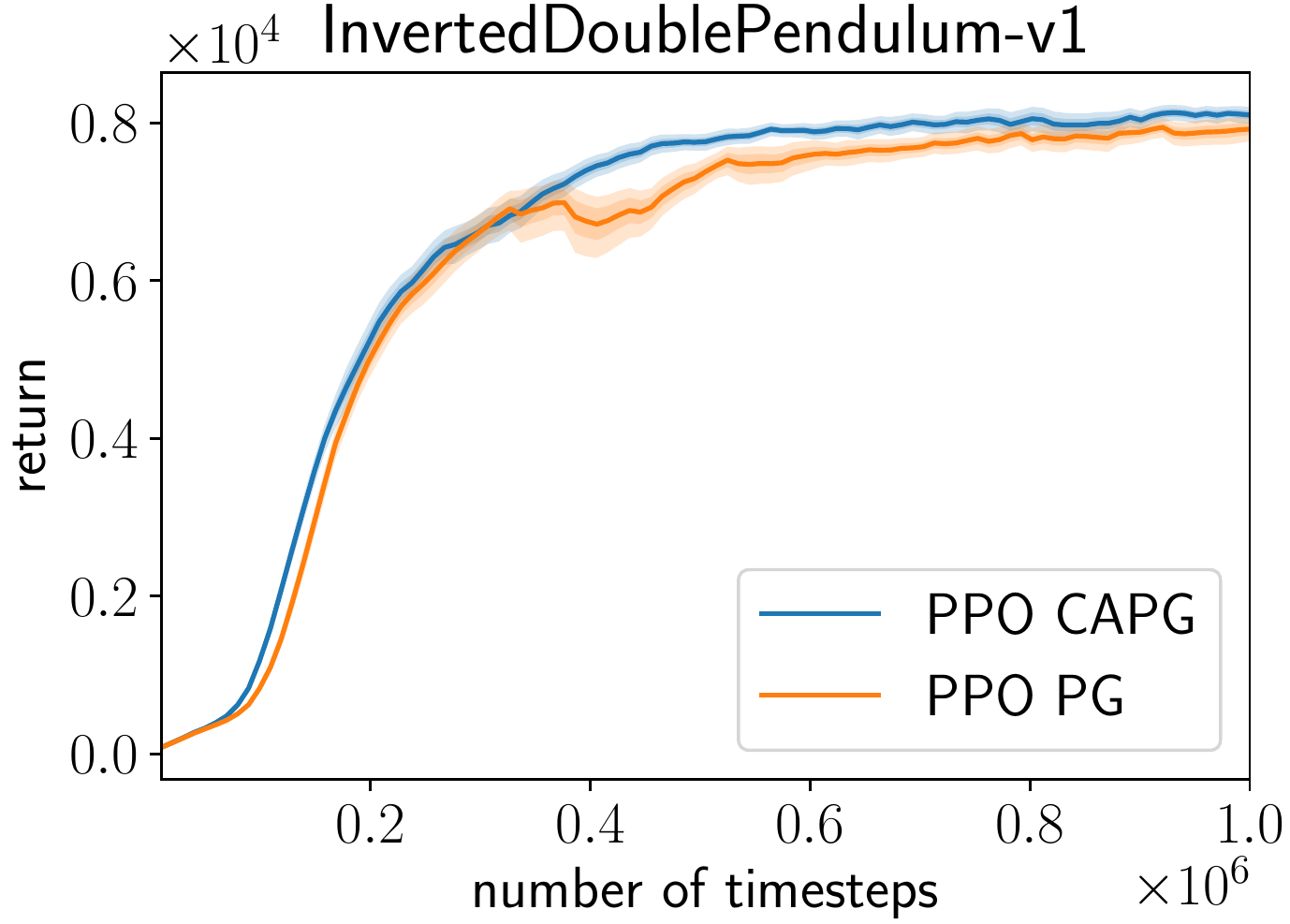}
  \includegraphics[width=0.196\textwidth]{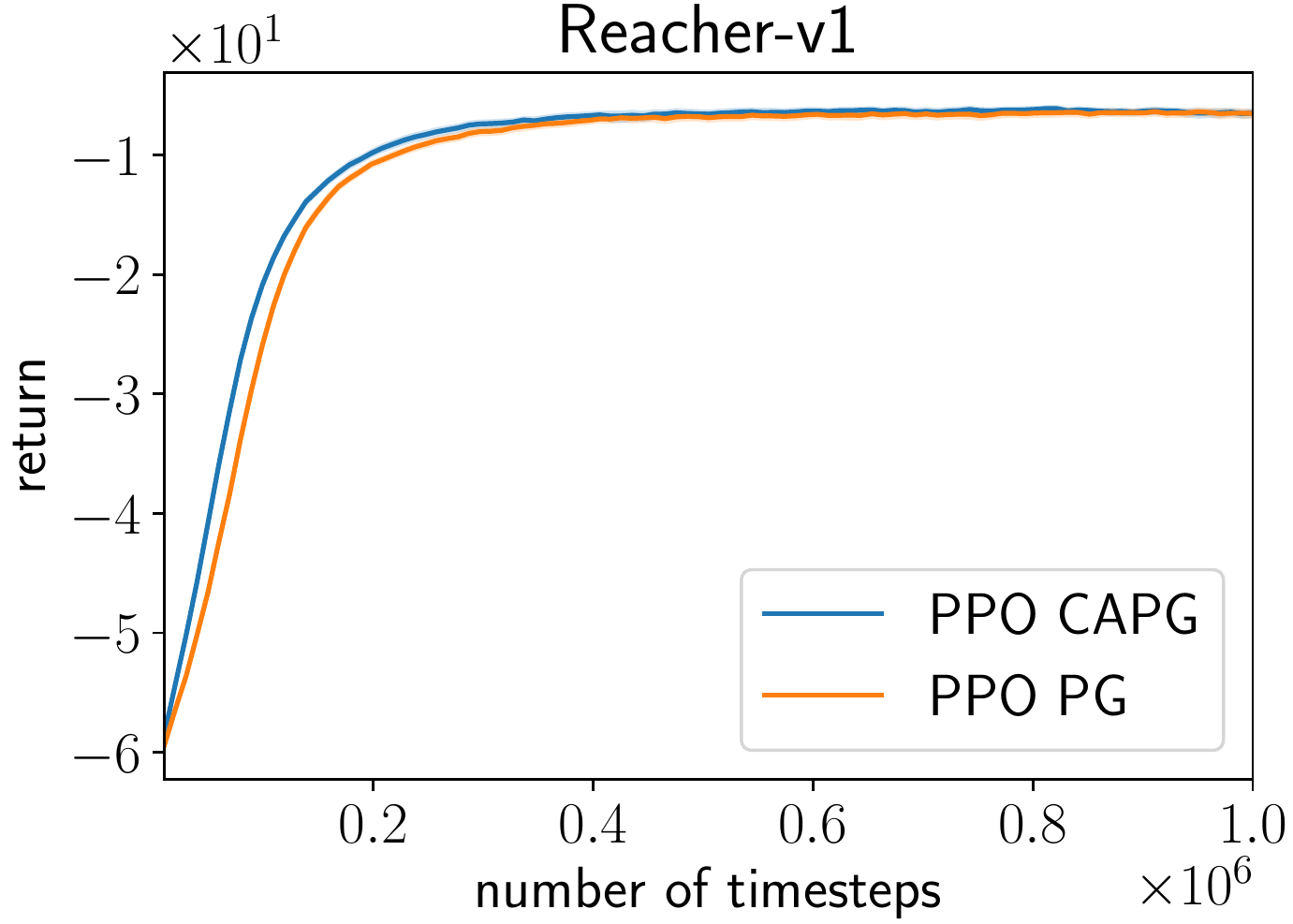}
  \includegraphics[width=0.196\textwidth]{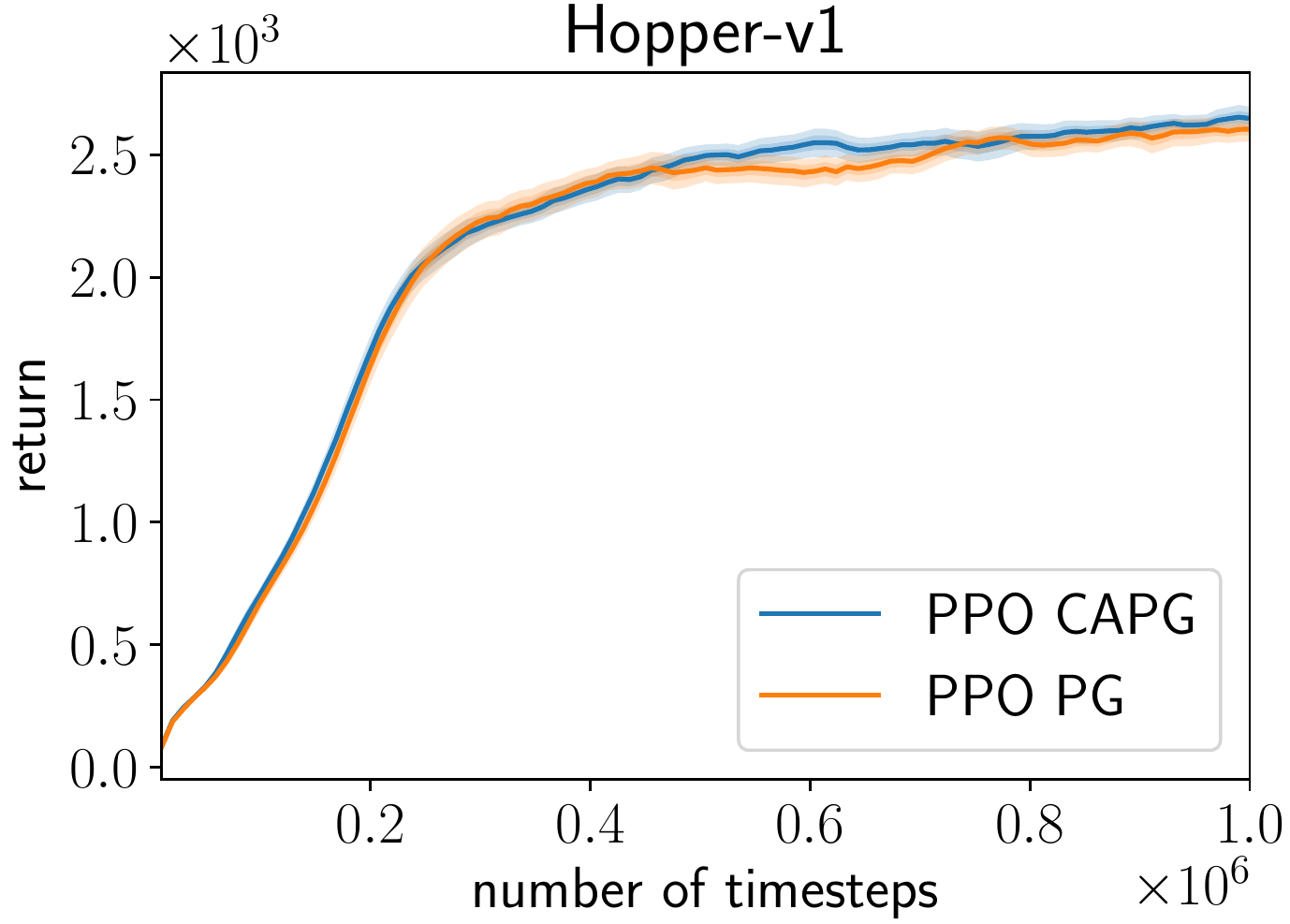}
  \includegraphics[width=0.196\textwidth]{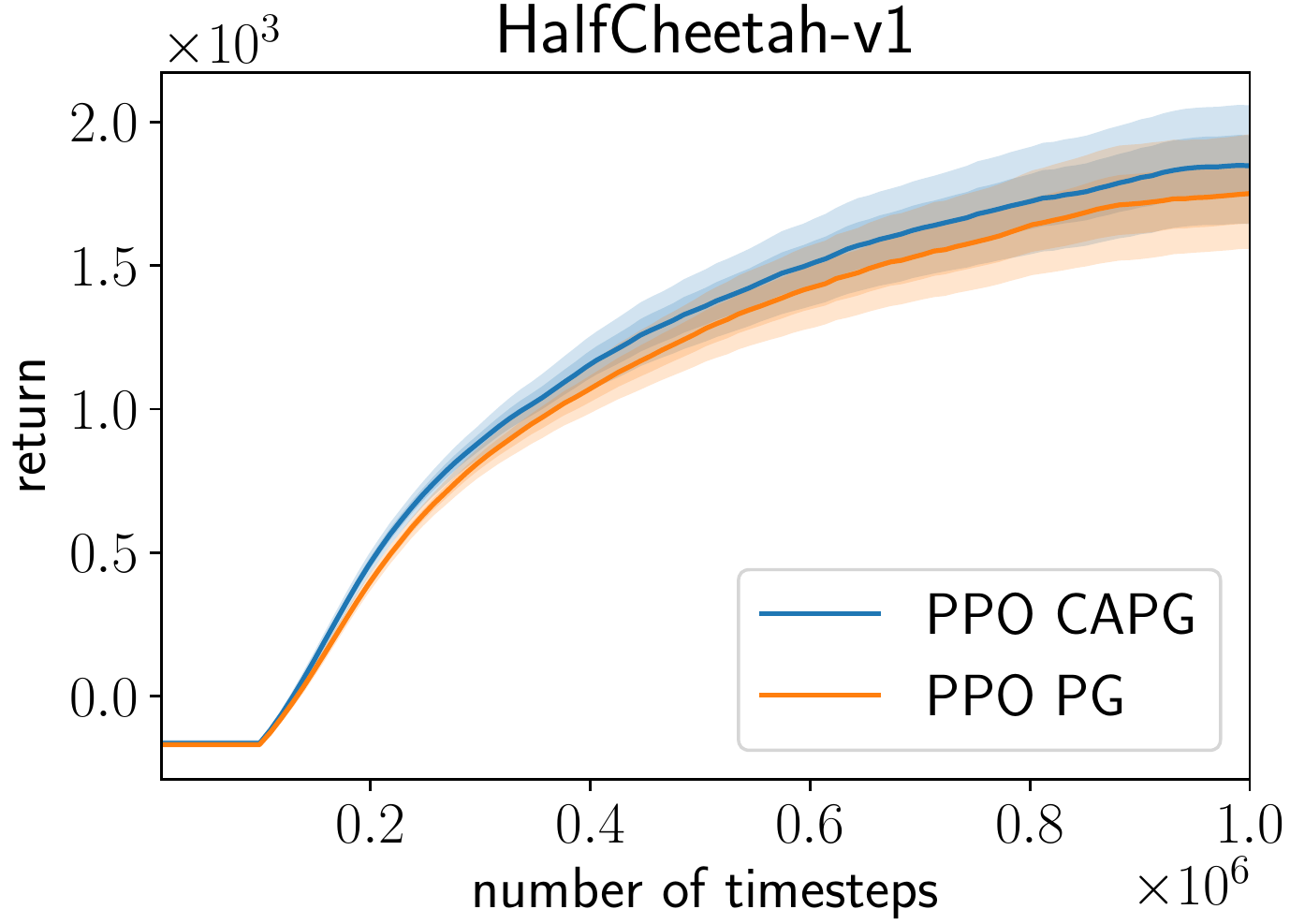}
  \includegraphics[width=0.196\textwidth]{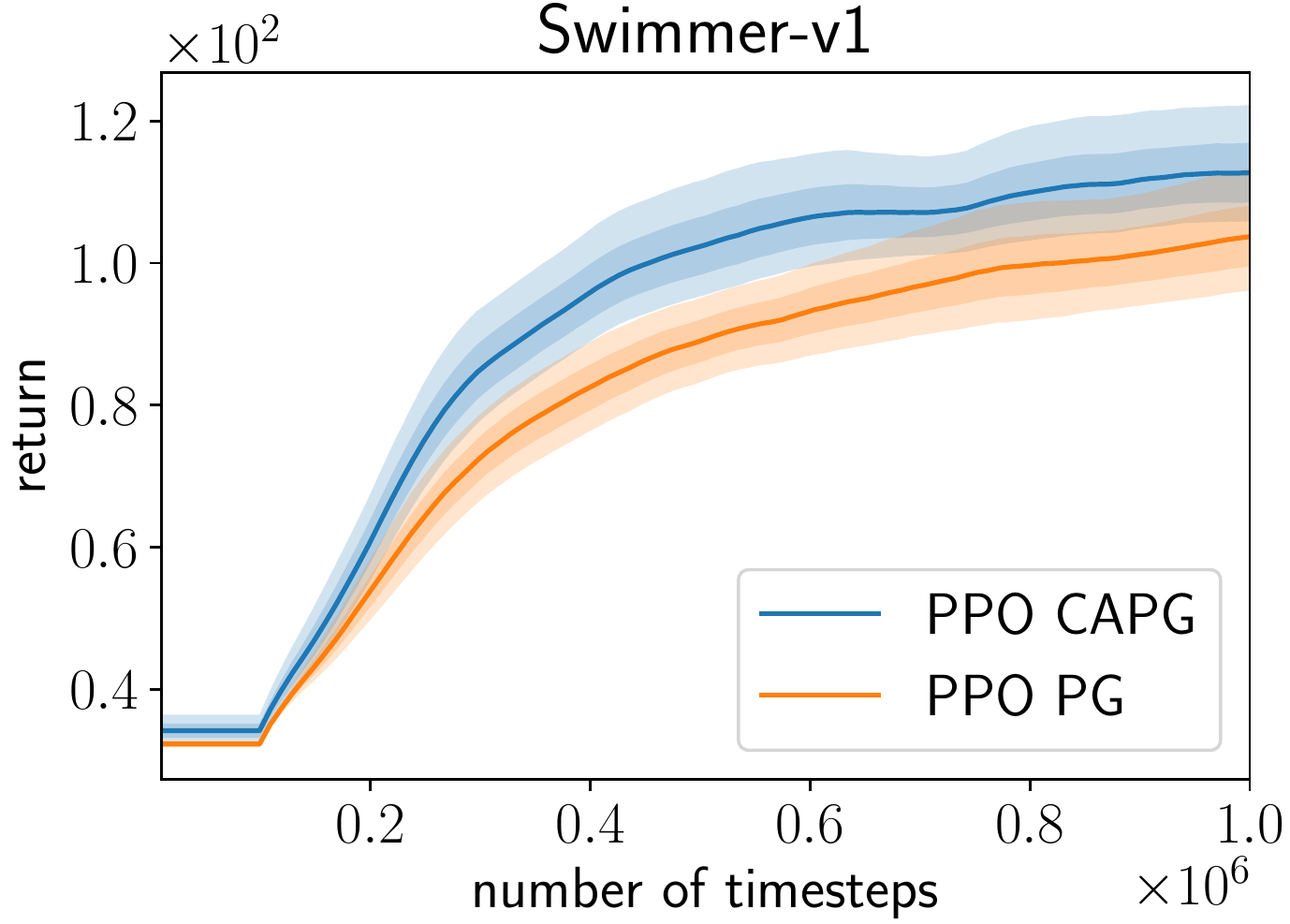}
  \includegraphics[width=0.196\textwidth]{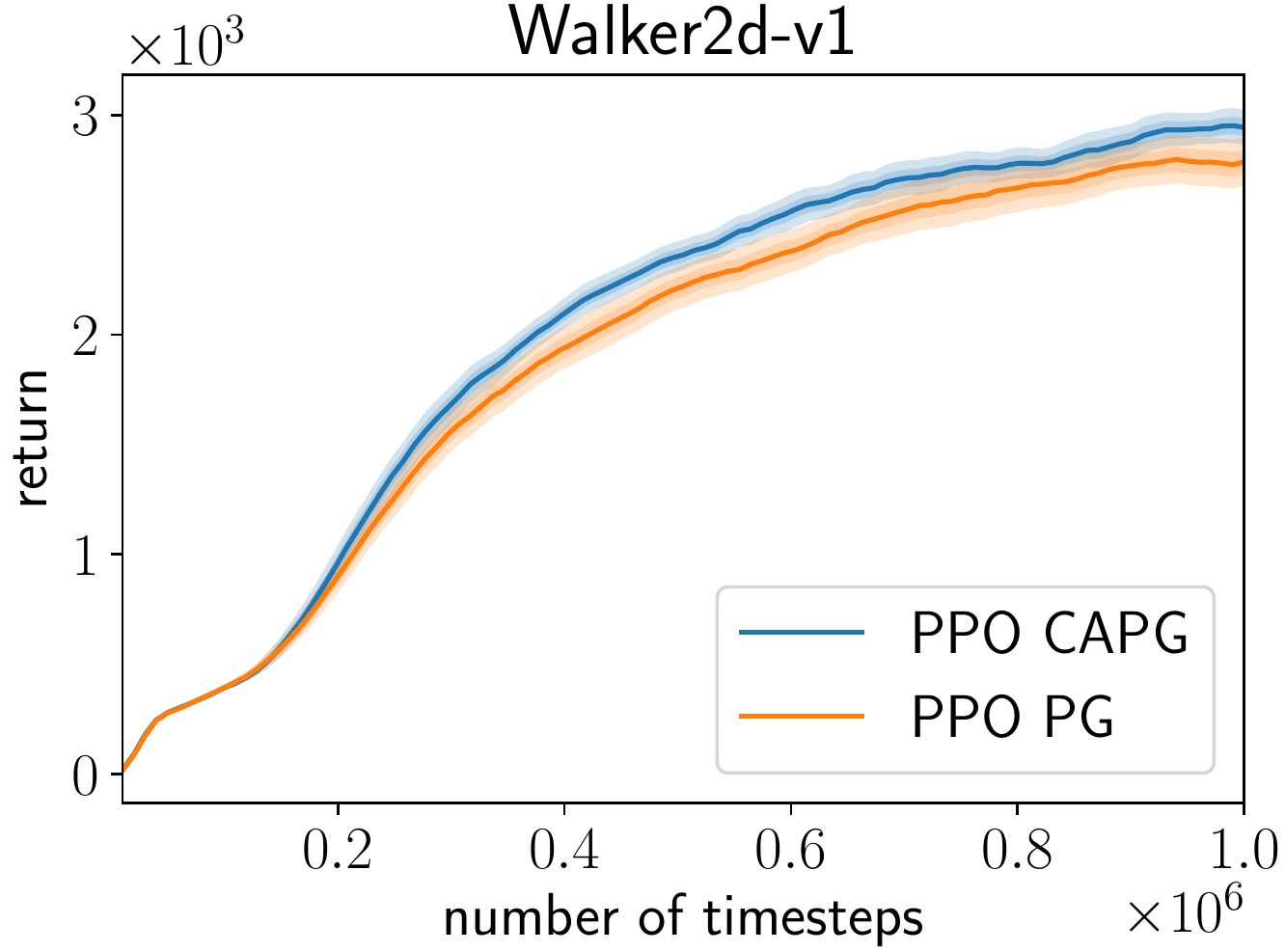}
  \includegraphics[width=0.196\textwidth]{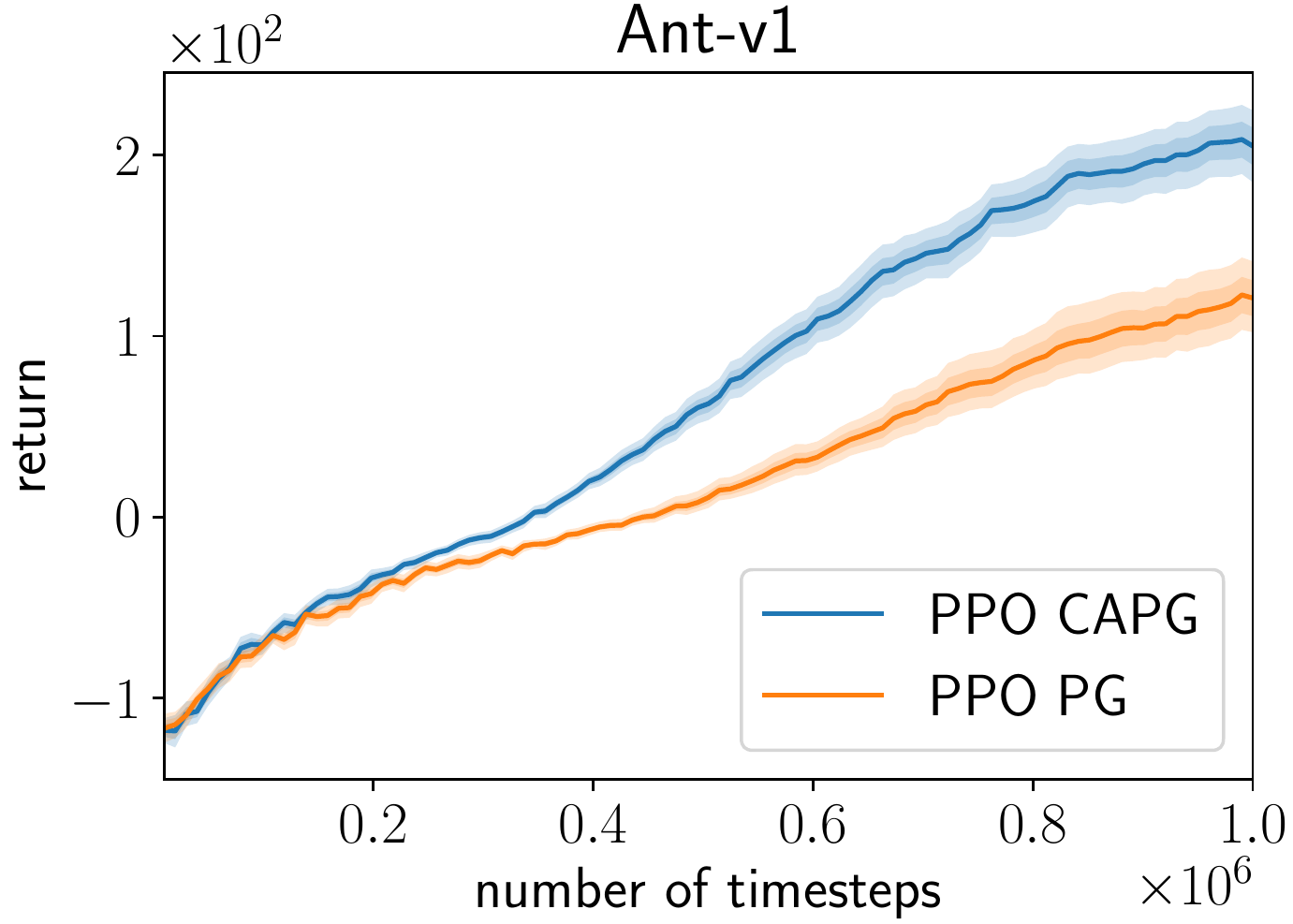}
  \includegraphics[width=0.196\textwidth]{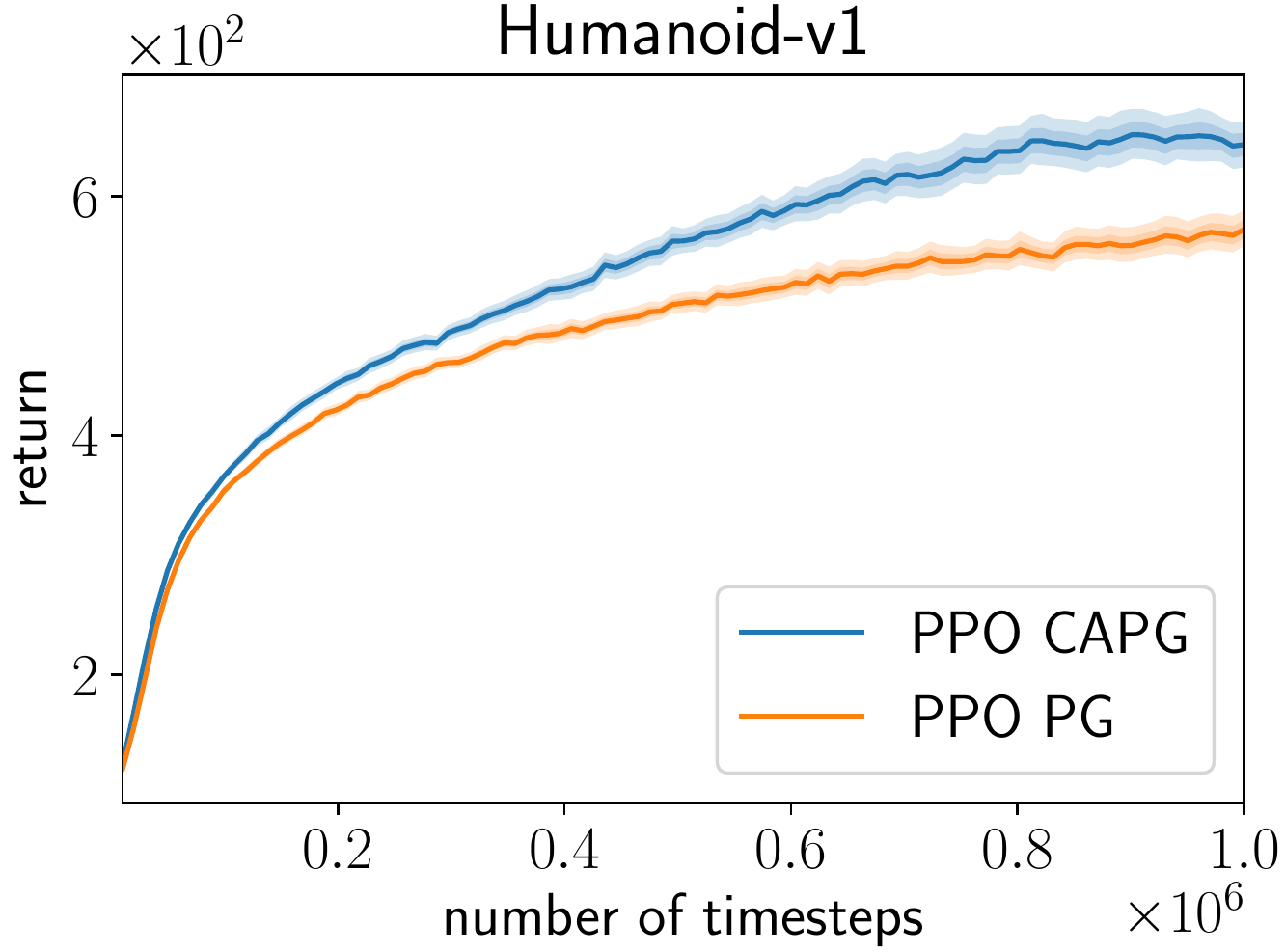}
  \includegraphics[width=0.196\textwidth]{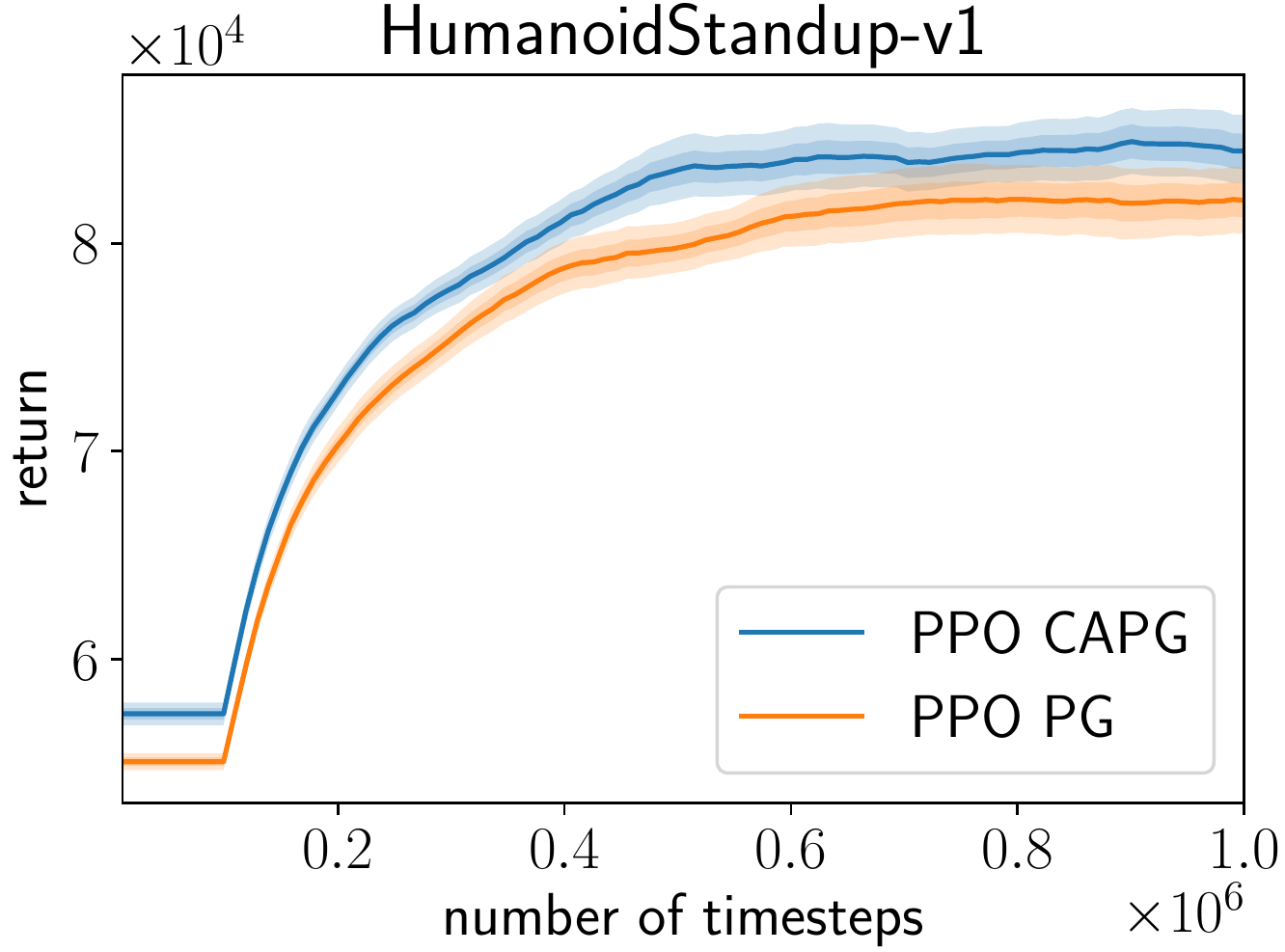}
  \includegraphics[width=0.196\textwidth]{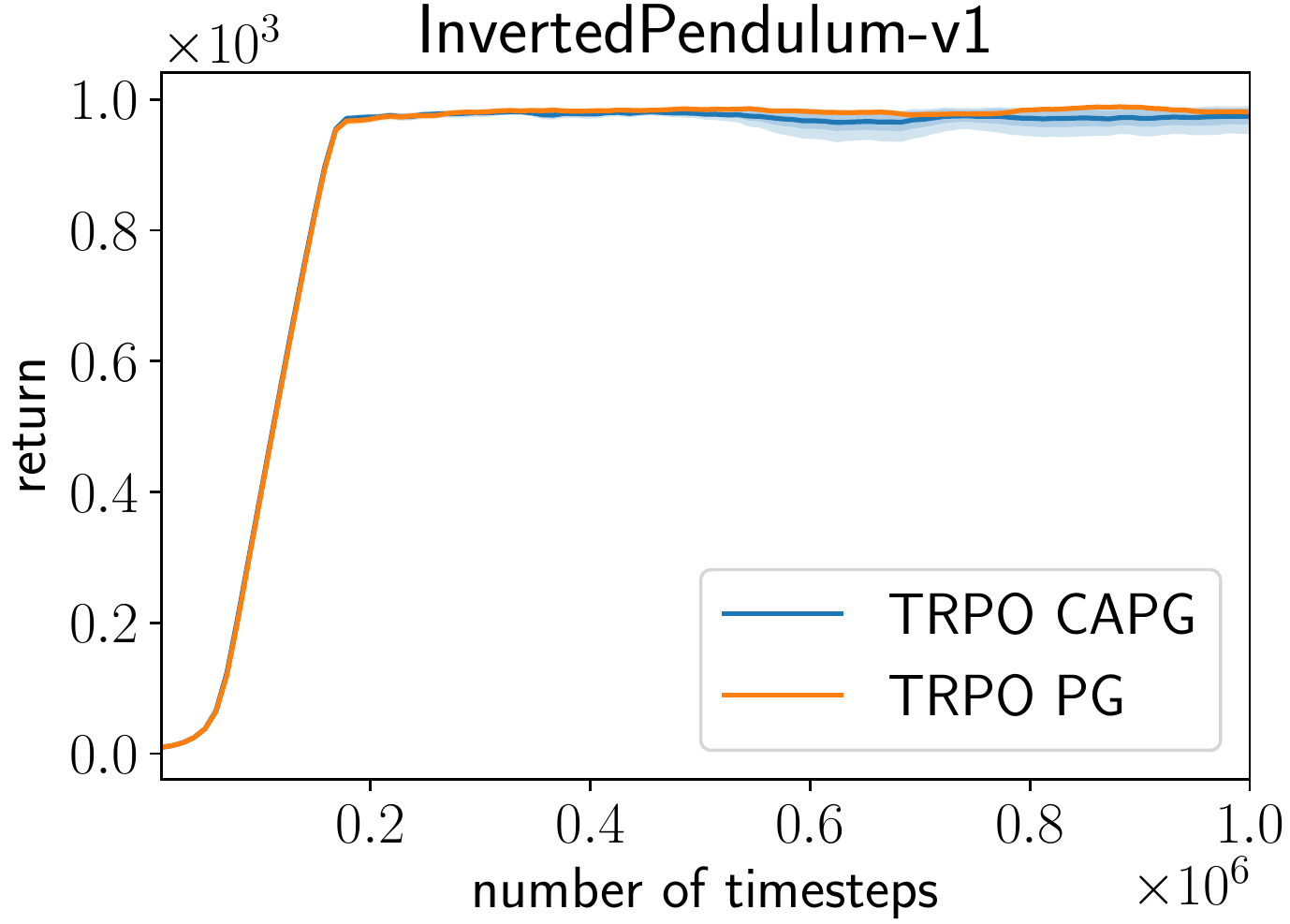}
  \includegraphics[width=0.196\textwidth]{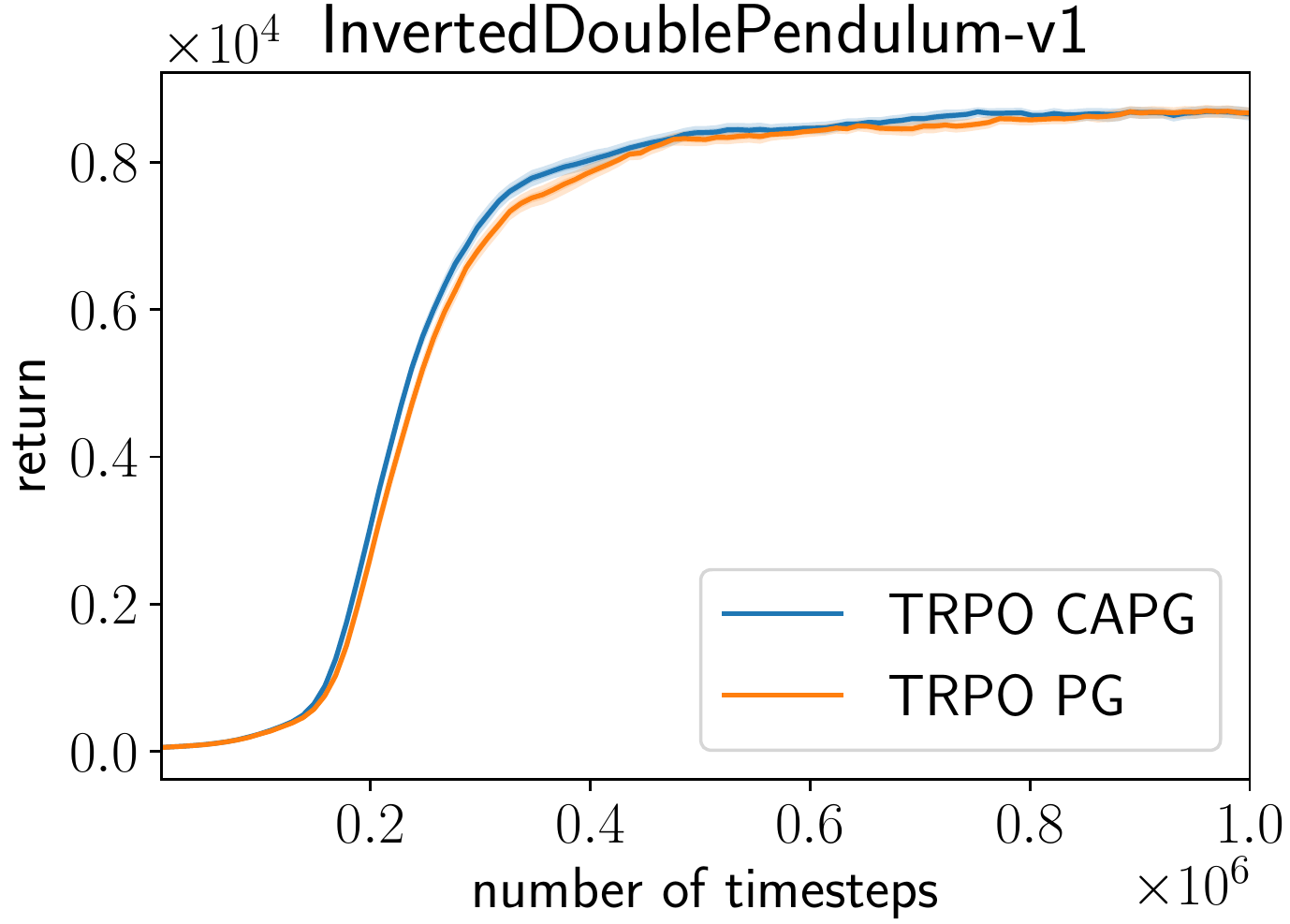}
  \includegraphics[width=0.196\textwidth]{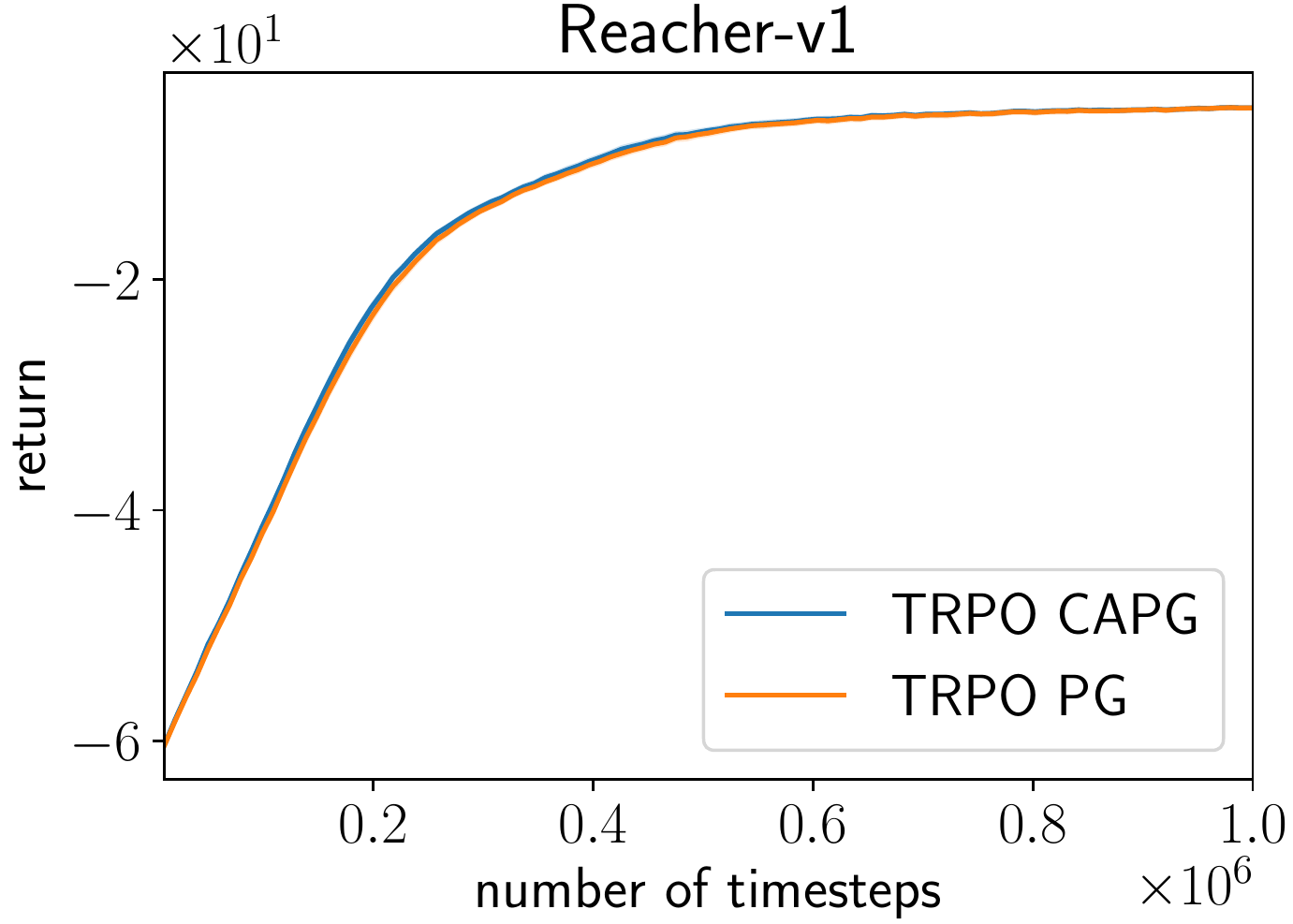}
  \includegraphics[width=0.196\textwidth]{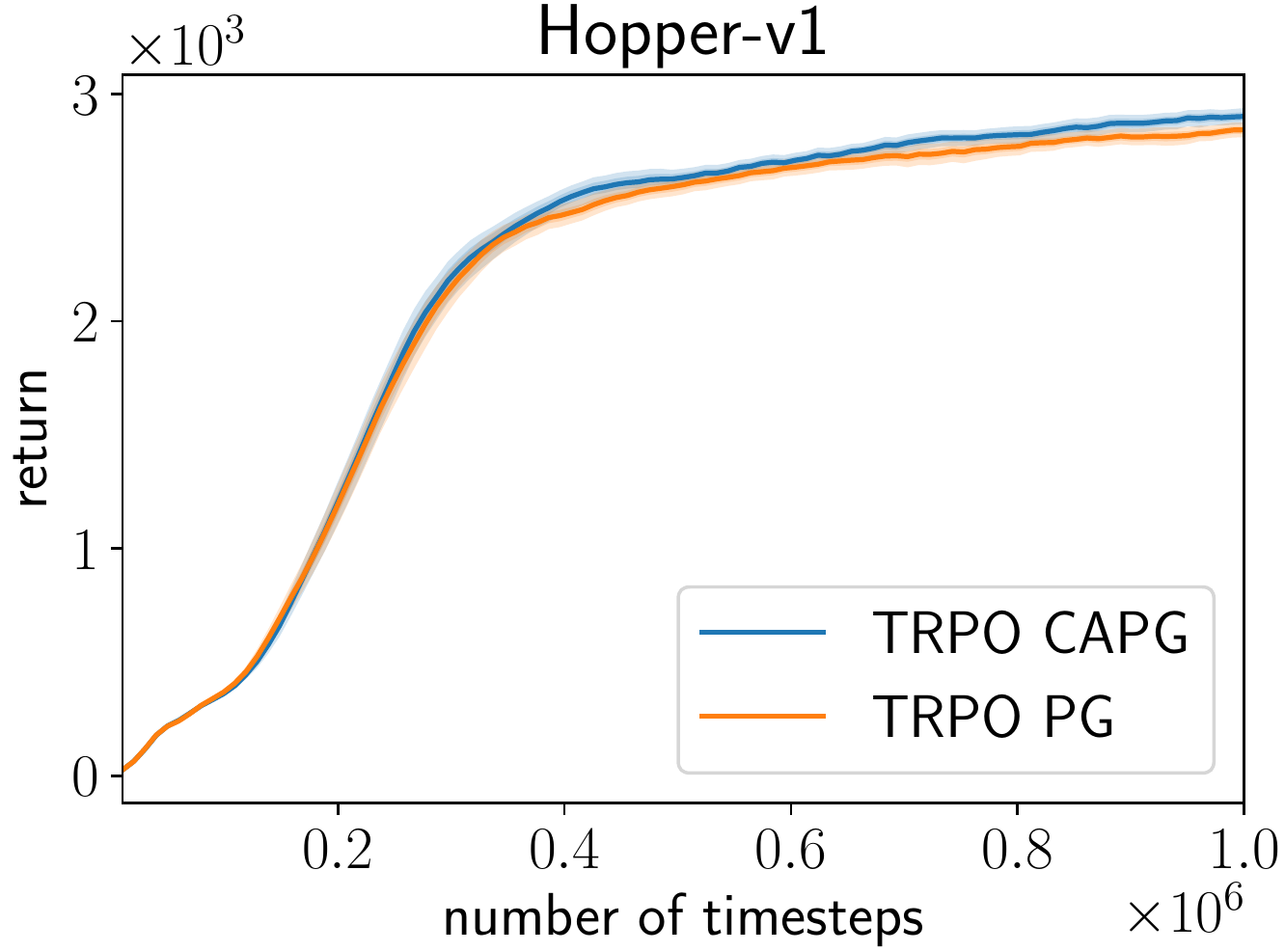}
  \includegraphics[width=0.196\textwidth]{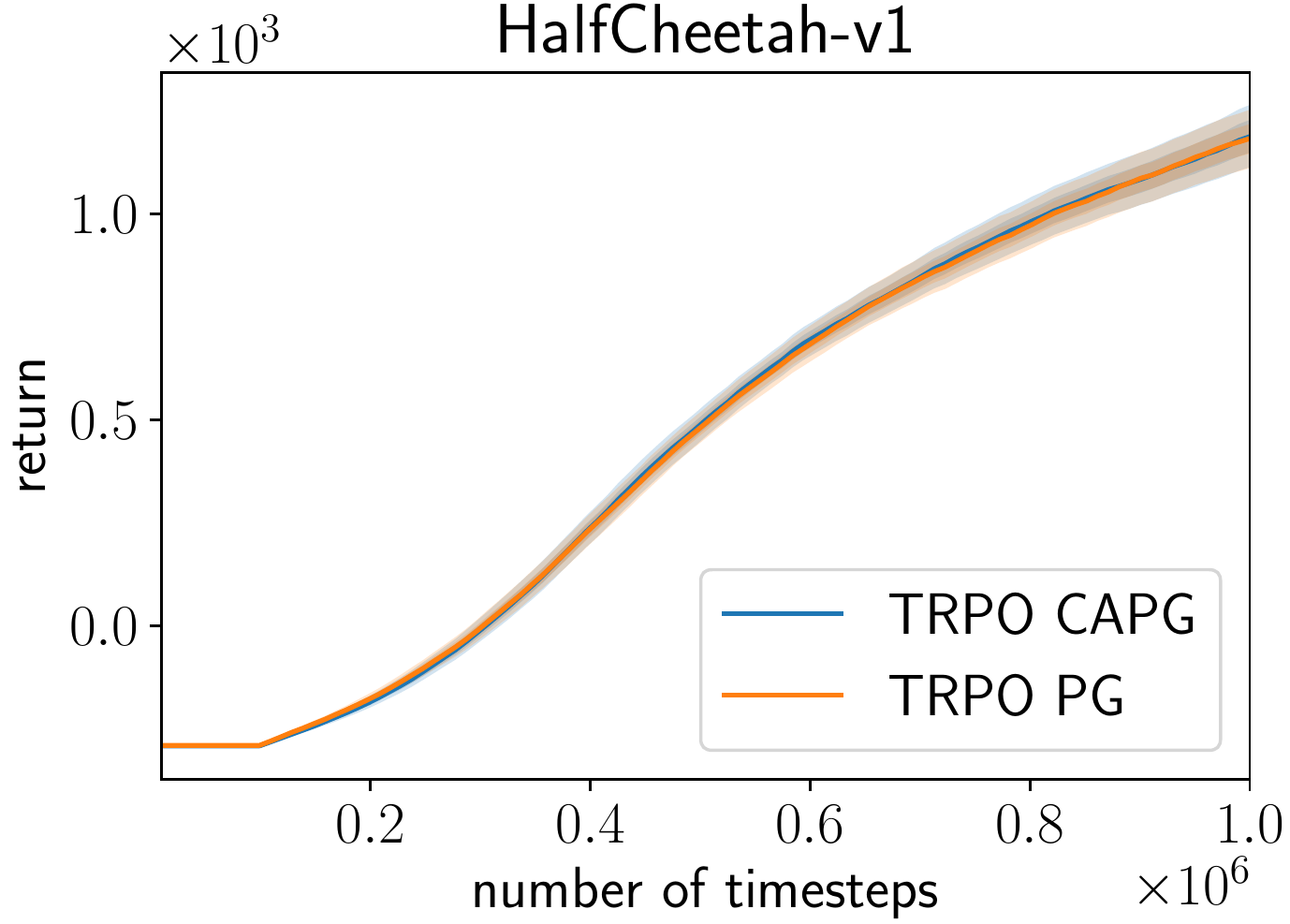}
  \includegraphics[width=0.196\textwidth]{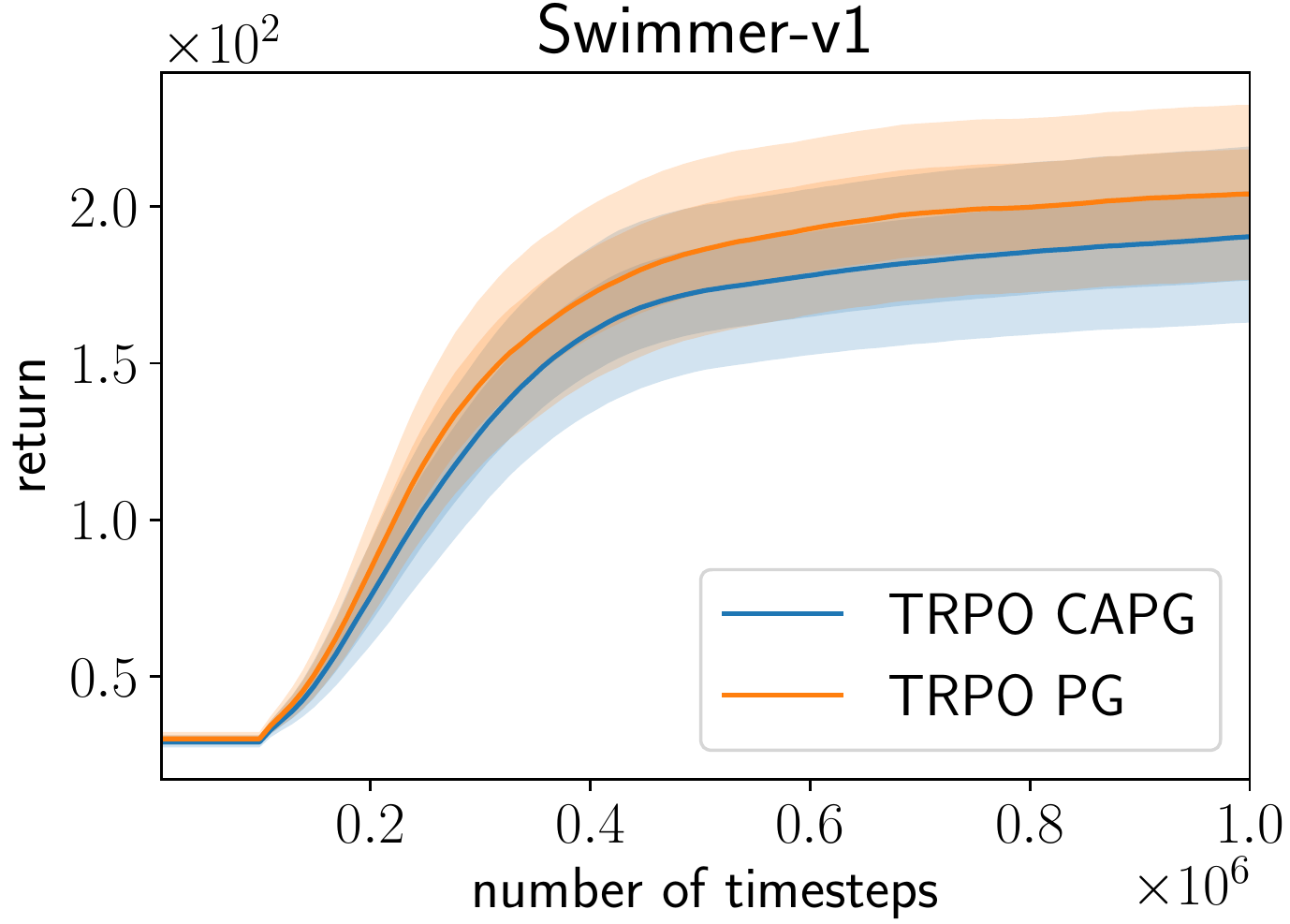}
  \includegraphics[width=0.196\textwidth]{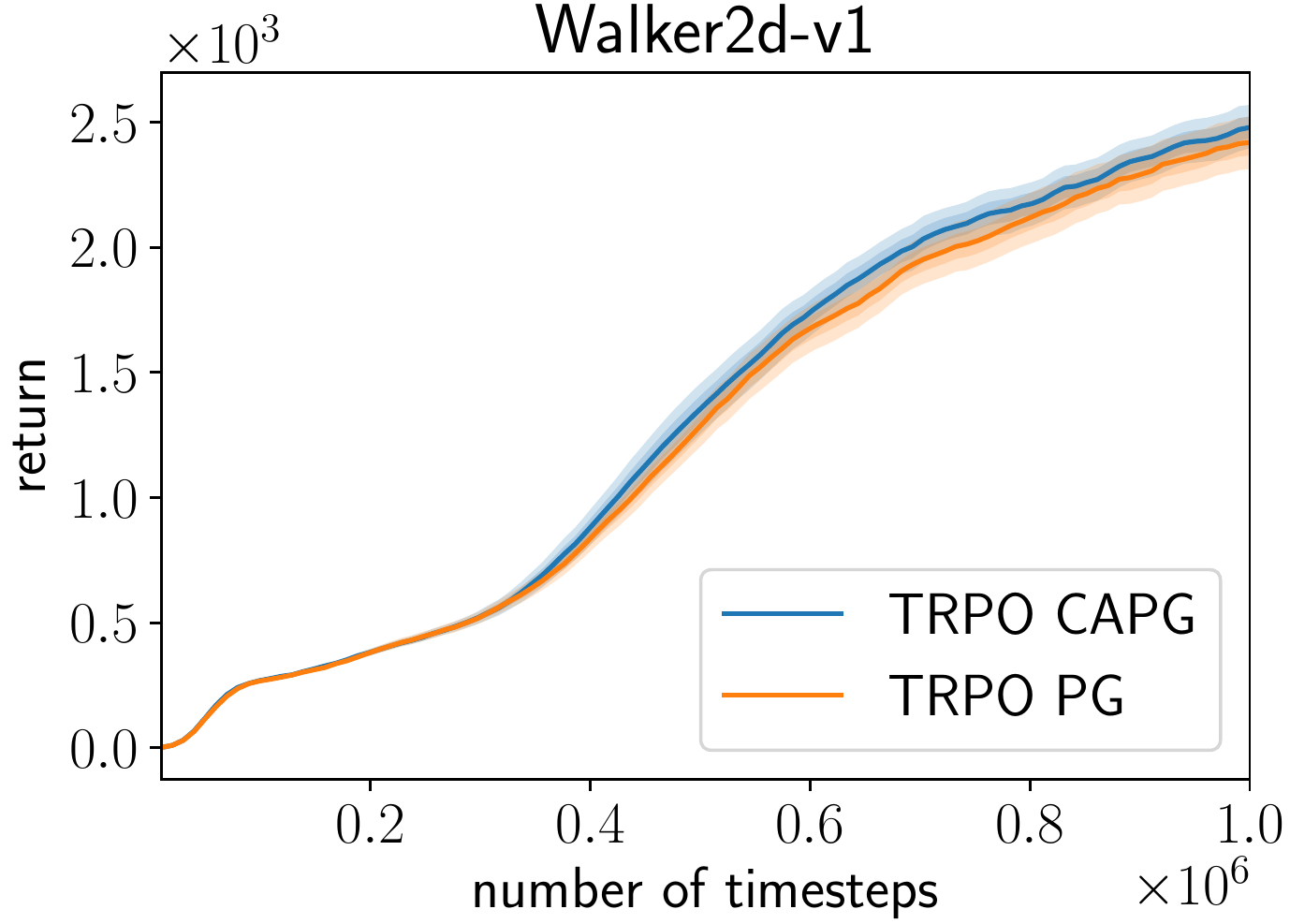}
  \includegraphics[width=0.196\textwidth]{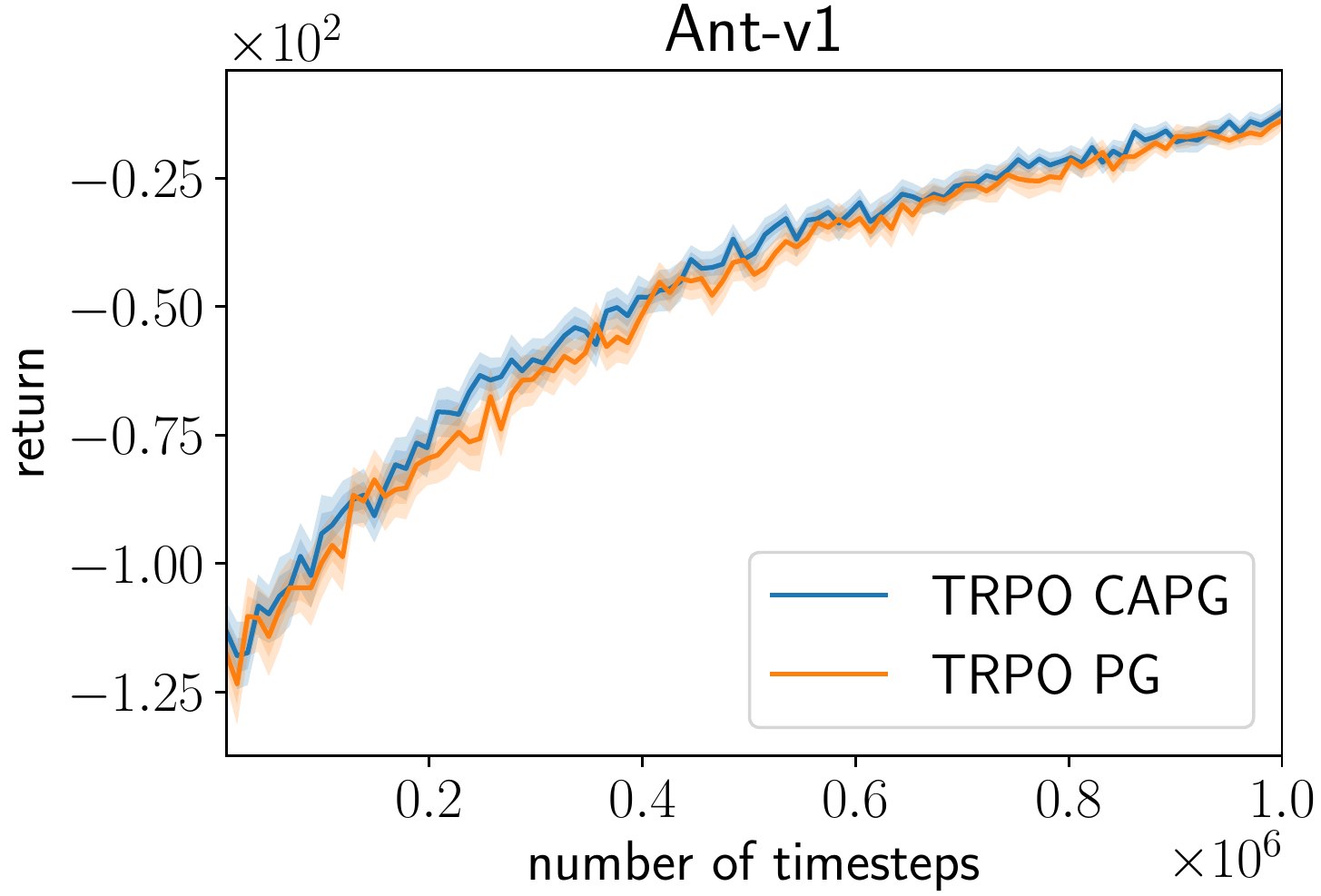}
  \includegraphics[width=0.196\textwidth]{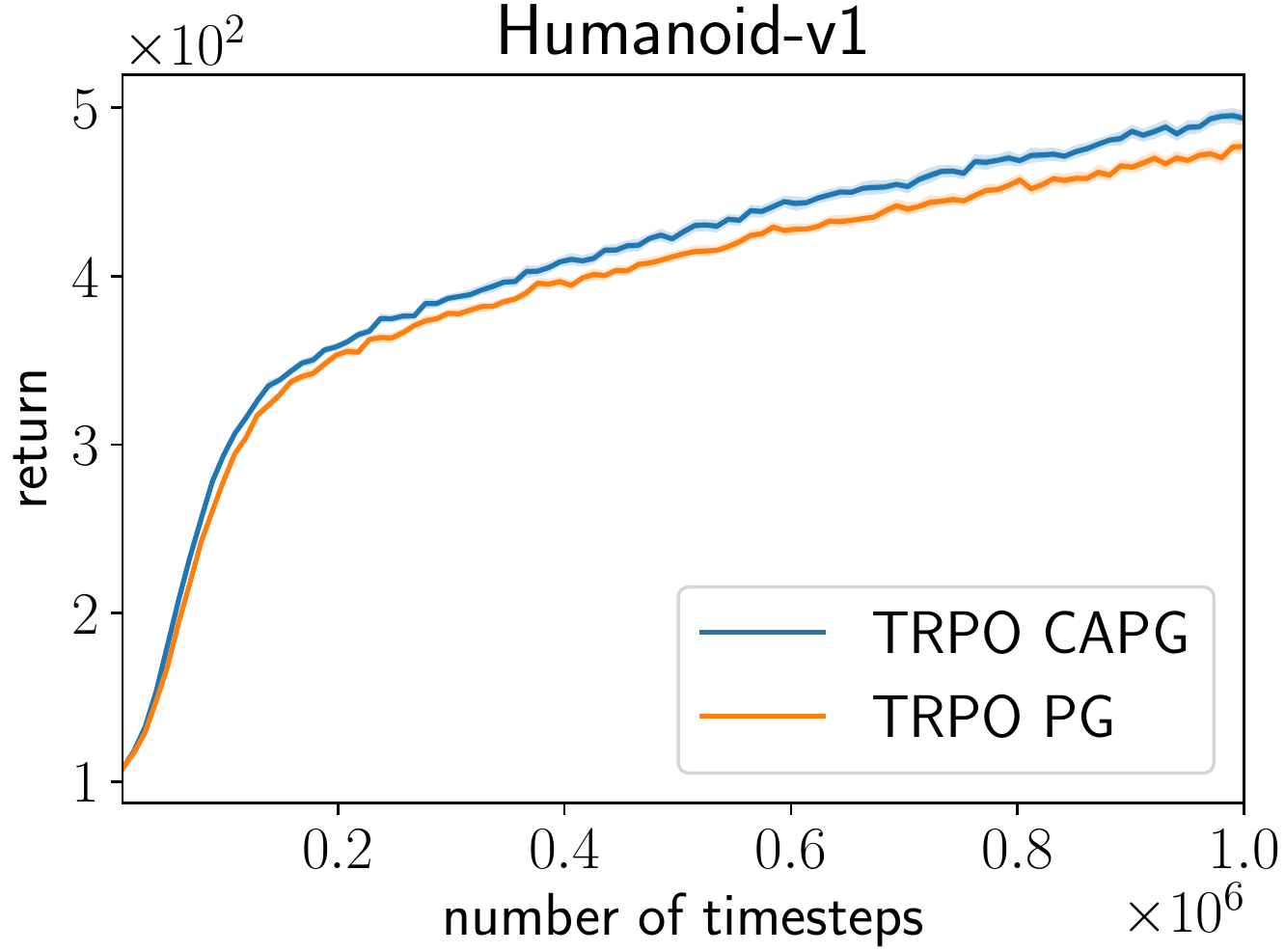}
  \includegraphics[width=0.196\textwidth]{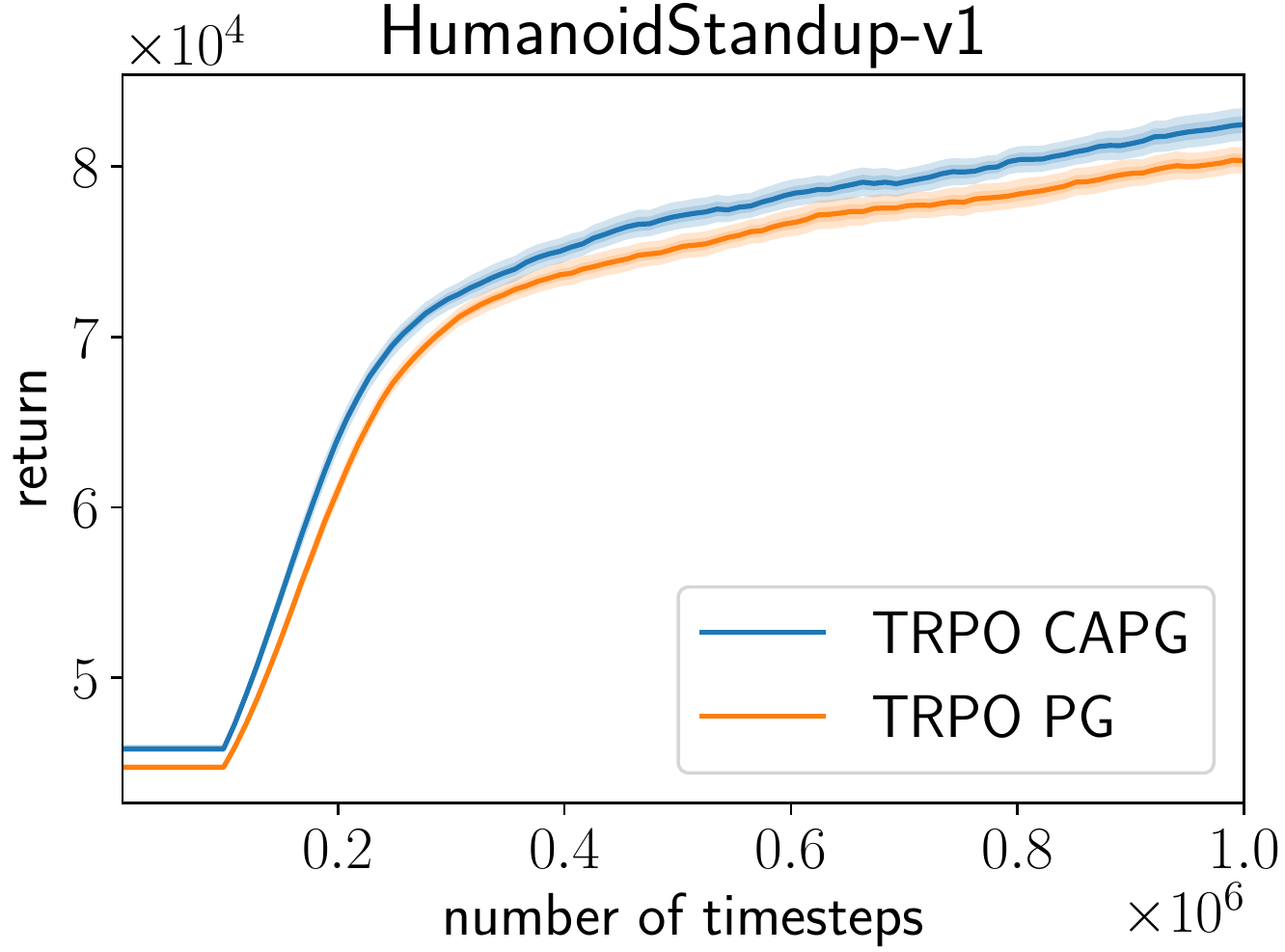}
  \caption{
Training curves of PPO (upper half) and TRPO (lower half) on the 10 MuJoCo-simulated environments.
For each run, after every training episode, the average return of the previous 100 training episodes is computed and linearly interpolated between the episodes to obtain a smoothed curve.
The smoothed curves are then averaged to compute the mean curves with 68\% and 95\% bootstrapped confidence intervals, which are indicated by the shaded areas.
}
  \label{fig:1m_curves}
\end{figure*}
\begin{figure*}[!t]
  \centering
  \includegraphics[width=0.27\textwidth]{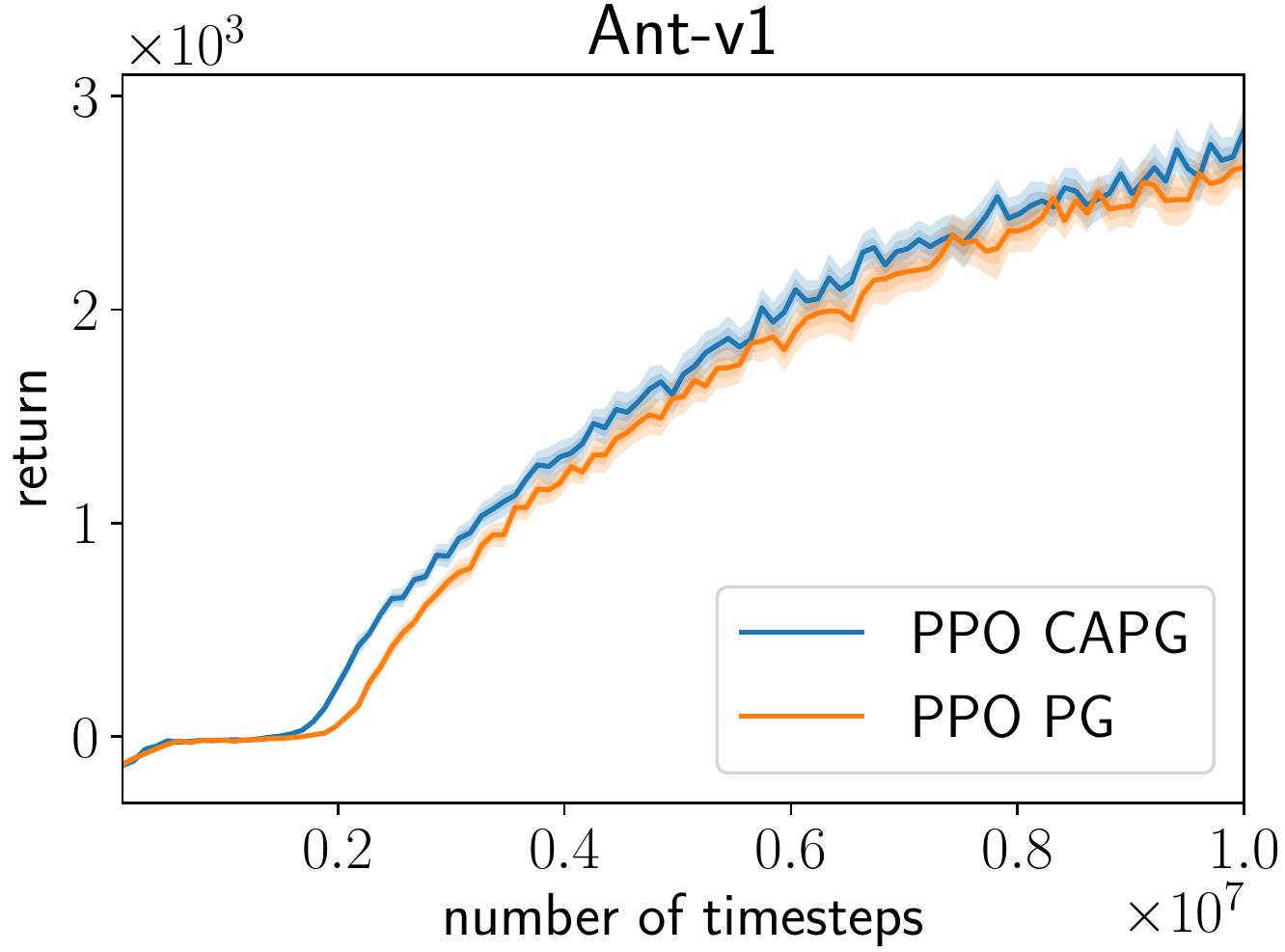}
  \includegraphics[width=0.27\textwidth]{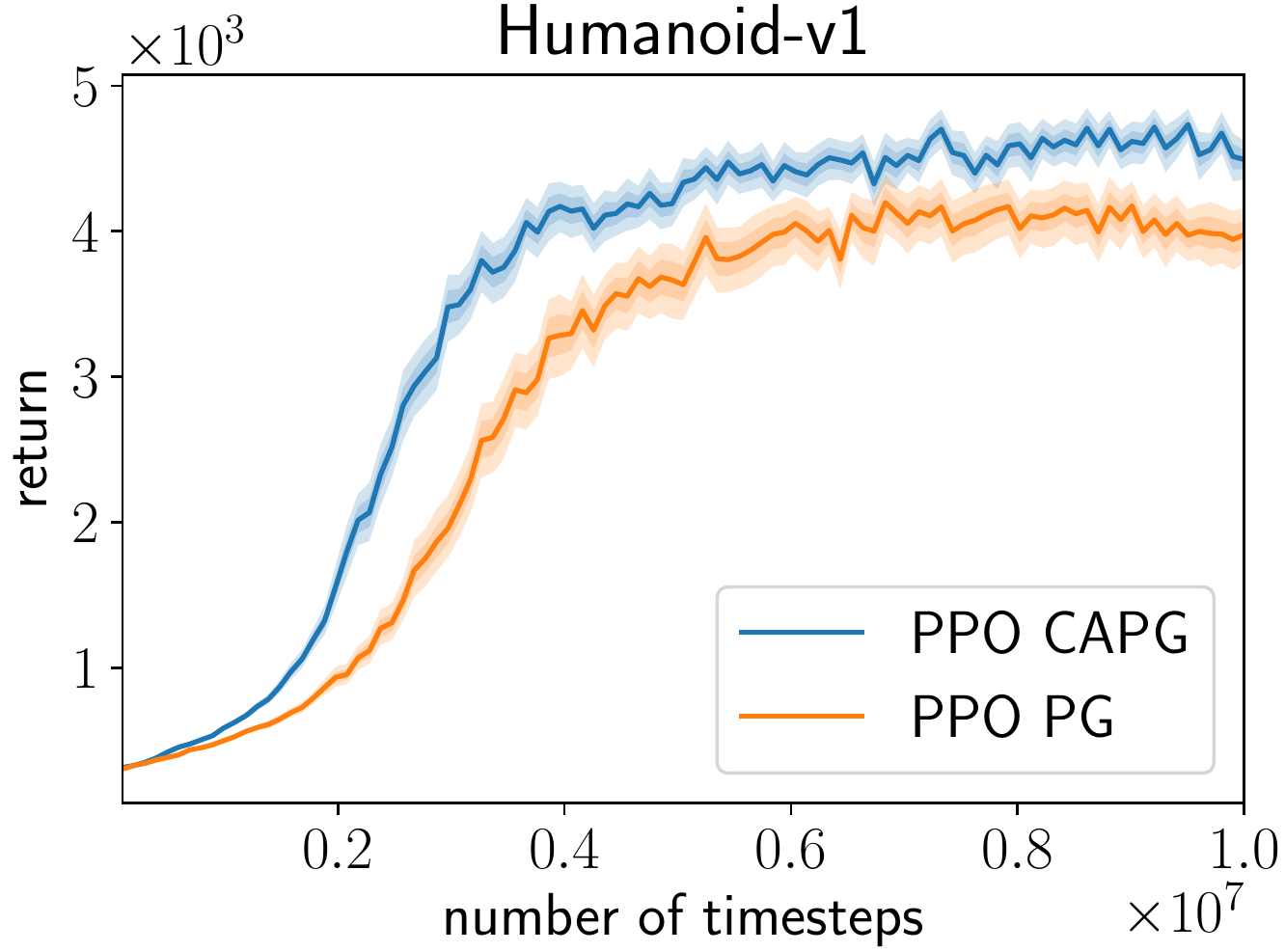}
  \includegraphics[width=0.27\textwidth]{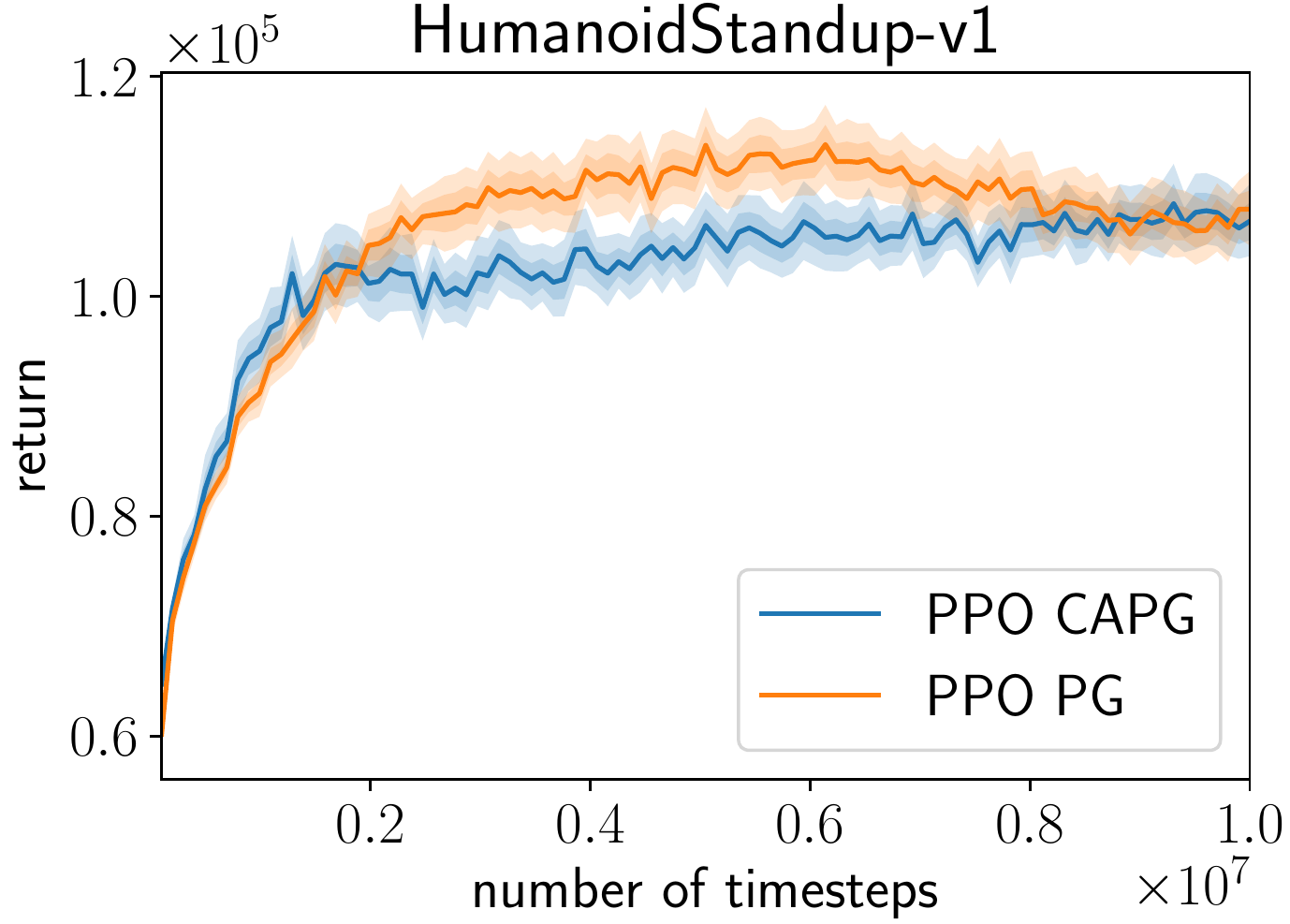}\\
  \includegraphics[width=0.27\textwidth]{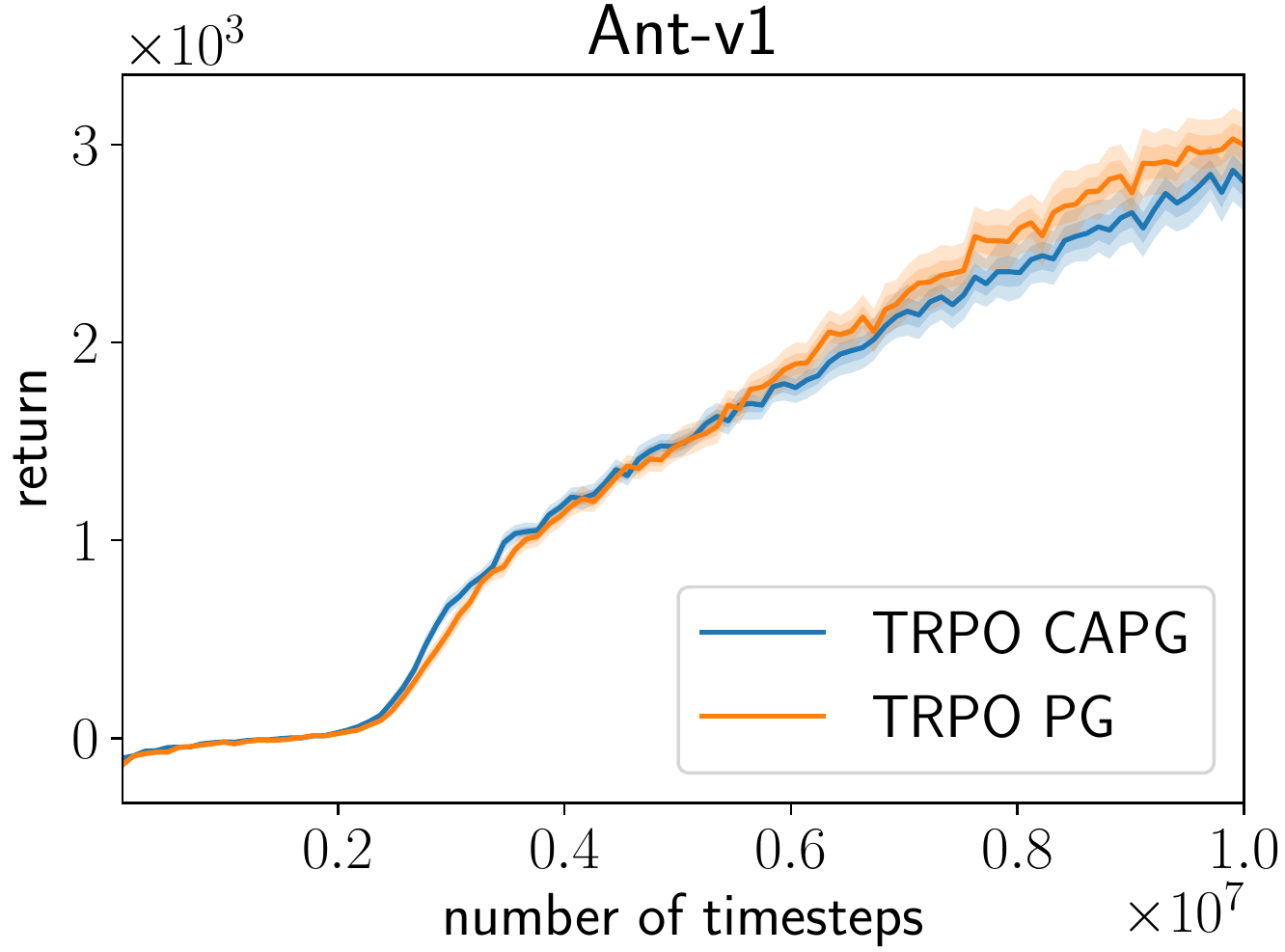}
  \includegraphics[width=0.27\textwidth]{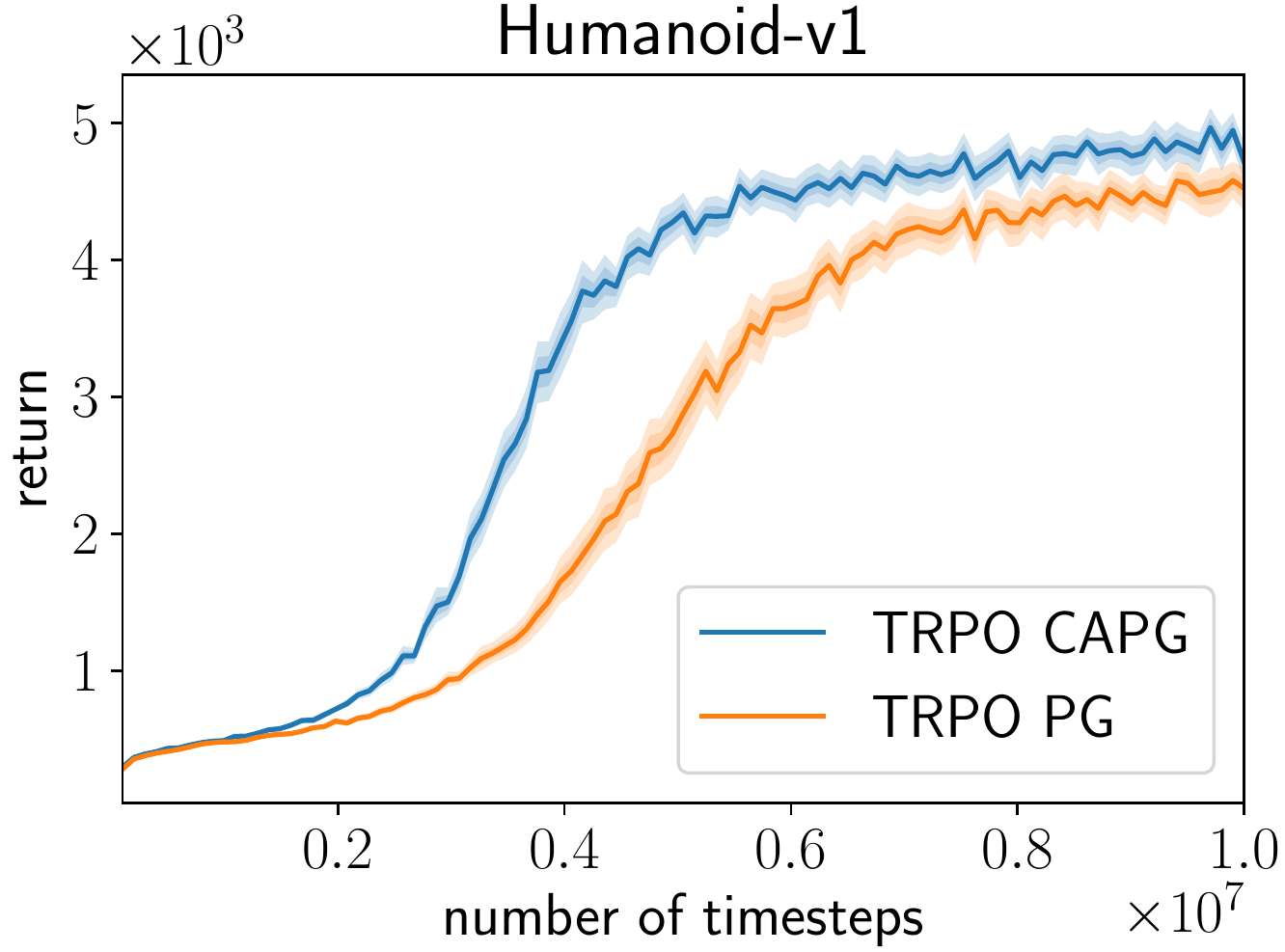}
  \includegraphics[width=0.27\textwidth]{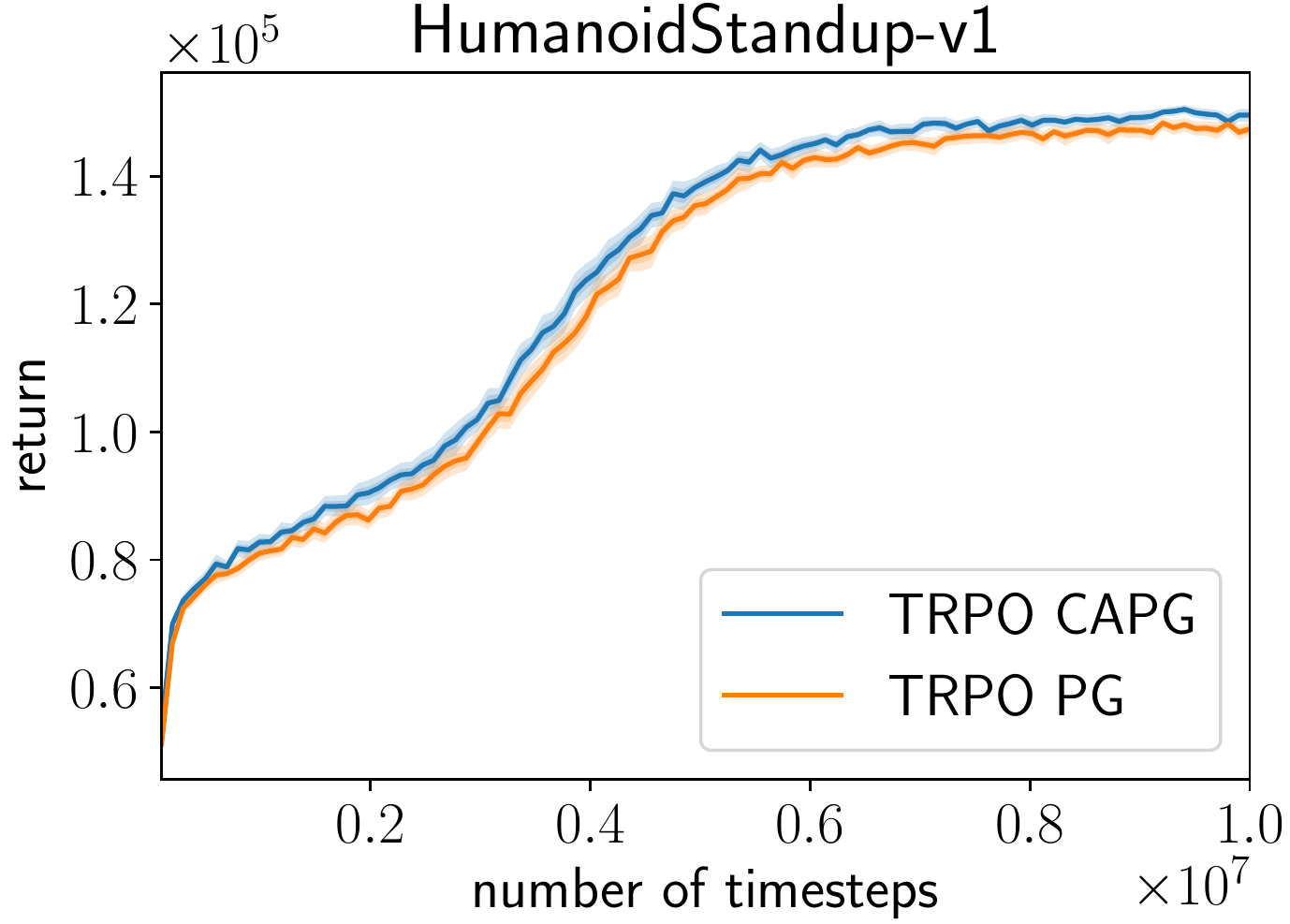}
  \caption{
Training curves of PPO (upper half) and TRPO (lower half) on the three harder MuJoCo-simulated environments.
For each run, after every training episode, the average return of the previous 100 training episodes is computed and linearly interpolated between the episodes to obtain a smoothed curve.
The smoothed curves are then averaged to compute the mean curves with 68\% and 95\% bootstrapped confidence intervals, which are indicated by the shaded areas.
}
  \label{fig:10m_curves}
\end{figure*}

\begin{table*}
  \scriptsize
  \centering
  \begin{tabular}{l|rrr|rrr}
  \toprule
{} & PPO CAPG & PPO PG & $p$-value & TRPO CAPG & TRPO PG & $p$-value \\
\midrule
InvertedPendulum-v1       &               955.30$\pm$1.12 &      955.68$\pm$0.84 &  7.88e-01 &               915.08$\pm$5.23 &      919.94$\pm$0.79 &  3.63e-01 \\
InvertedDoublePendulum-v1 &    \textbf{7239.24$\pm$23.01} &    6991.40$\pm$43.01 &  2.67e-06 &    \textbf{7108.54$\pm$18.17} &    7007.32$\pm$18.95 &  2.07e-04 \\
Reacher-v1                &      \textbf{-10.67$\pm$0.15} &      -11.60$\pm$0.17 &  8.71e-05 &               -14.66$\pm$0.13 &      -14.93$\pm$0.13 &  1.41e-01 \\
Hopper-v1                 &             2320.49$\pm$11.49 &    2288.50$\pm$17.91 &  1.37e-01 &             2313.33$\pm$16.14 &    2283.55$\pm$16.03 &  1.94e-01 \\
HalfCheetah-v1            &             1219.54$\pm$60.94 &    1144.53$\pm$58.81 &  3.78e-01 &              502.05$\pm$18.36 &     499.99$\pm$18.57 &  9.37e-01 \\
Swimmer-v1                &       \textbf{92.56$\pm$3.48} &       82.45$\pm$2.75 &  2.49e-02 &              148.86$\pm$11.44 &     161.18$\pm$11.92 &  4.58e-01 \\
Walker2d-v1               &    \textbf{2185.63$\pm$26.23} &    2060.95$\pm$38.92 &  9.41e-03 &             1436.38$\pm$30.31 &    1390.69$\pm$27.60 &  2.68e-01 \\
Ant-v1                    &       \textbf{56.85$\pm$5.19} &      -33.32$\pm$7.26 &  2.01e-16 &     \textbf{-204.68$\pm$1.84} &     -212.15$\pm$1.92 &  6.04e-03 \\
Humanoid-v1               &      \textbf{547.64$\pm$5.90} &      493.39$\pm$3.89 &  2.49e-11 &      \textbf{415.88$\pm$0.79} &      402.19$\pm$0.75 &  3.96e-22 \\
HumanoidStandup-v1        &  \textbf{79414.10$\pm$496.59} &  76845.37$\pm$512.67 &  5.03e-04 &  \textbf{73592.94$\pm$292.50} &  71796.93$\pm$265.64 &  1.58e-05 \\
\midrule
Ant-v1 (10m)              &    \textbf{1579.54$\pm$10.64} &     1476.51$\pm$15.21 &  2.98e-07 &              1395.50$\pm$28.44 &     1449.61$\pm$32.05 &  2.10e-01 \\
Humanoid-v1 (10m)         &    \textbf{3650.00$\pm$33.98} &     3107.34$\pm$59.01 &  1.06e-11 &     \textbf{3353.08$\pm$23.57} &     2743.53$\pm$40.29 &  1.99e-21 \\
HumanoidStandup-v1 (10m)  &         101826.33$\pm$1012.21 & 105289.56$\pm$1173.48 &  2.78e-02 &  \textbf{123777.22$\pm$383.77} &  120994.09$\pm$403.91 &  2.57e-06 \\
\bottomrule
  \end{tabular}
\caption{
Performance comparison of CAPG and PG on the 10 MuJoCo-simulated environments.
Performance is evaluated with the average area under the learning curve (AUC) $\pm$ standard error over 1 million timesteps.
For each training run, its AUC is computed by linearly interpolating returns between training episodes.
For each combination of \{TRPO, PPO\} $\times$ \{CAPG, PG\} $\times$ 10 environments, from 50 training runs with different random seeds, the average AUC and standard error are computed.
$p$-values are also computed between CAPG and PG versions using Welch's t-test.
Bold numbers indicate that they are better than their counterparts by 95\% significance.
}
\label{tab:auc}
\end{table*}

We considered all the combinations of \{PPO, TRPO\} $\times$ \{CAPG, PG\} $\times$ 10 environments, each of which is trained for 1 million timesteps.
Each combination is tried 50 times with different random seeds.
Because we found it difficult to obtain reasonable performance within 1 million timesteps on Ant-v1, Humanoid-v1, and HumanoidStandup-v1, we also tried training for 10 million timesteps on these environments.

We followed the hyperparameter settings used in \cite{Henderson2017a}, except that the learning rate of Adam used by PPO was reduced to 3e-5 for 10 million timesteps training to obtain reasonable performance with PG.
We used separate neural networks with two hidden layers, each of which has 64 hidden units with tanh nonlinearities, for both a policy and a state value function.
The policy network outputs the mean of a multivariate Gaussian distribution.
The main diagonal of the covariance matrix was separately parameterized as a logarithm of the standard deviation for each dimension.

Table~\ref{tab:auc} summarizes the comparison between CAPG and PG, combined with TRPO and PPO.
We used areas under the learning curves (AUCs) as evaluation measures because they can measure not only the final performance but also the learning speed and stability.

For PPO and TRPO with 1 million training timesteps, CAPG significantly ($p < 0.025$, i.e., $>95\%$ significance) improved AUCs on 3 and 7 out of the 10 environments, respectively.
It also significantly helped in training for 10 million timesteps on two out of the three harder environments for both PPO and TRPO.
On other environments, it kept almost the same level of AUCs on other tasks, although there seemed to be slight decreases in some environments.
These results indicate that CAPG can safely replace PG in many cases.

Figures~\ref{fig:1m_curves} and \ref{fig:10m_curves} show the smoothed learning curves of all the experiments.
In some cases, the improvements were small but consistent, e.g., TRPO on InvertedDoublePendulum-v1 and TRPO on HumanoidStandup-v1 (10 million).
In some other cases, large improvements were achieved, e.g., PPO on Swimmer-v1 and TRPO on Humanoid-v1 (10 million).

Although we used the same hyperparameters from \cite{Henderson2017a} for both PG and CAPG, the best hyperparameters for CAPG can be different.
It is possible that separate hyperparameter tuning can further improve the performance of CAPG.

Comparing the results of PPO and TRPO, PPO was more affected than TRPO by the difference in estimators, suggesting that PPO is more vulnerable to high variance in gradient estimation.
TRPO is likely to be more robust against variance for the following reasons.
\begin{itemize}
  \item TRPO uses a large batch of 5000 actions for every policy update.
    PPO uses minibatches of 64 actions, resulting in noisier updates.
  \item TRPO solves a constrained optimization problem for every policy update so that the change in KL divergence is close to a constant; thus, it is robust to changes in the scale of gradients.
    PPO also adapts its step size using Adam, but this adaptation is slower and based on the statistics of accumulated past gradients.
\end{itemize}
Because we observe that even TRPO can benefit from CAPG, we expect the benefits address other algorithms with noisier updates as well.

\section{Related Work}

A variety of techniques has been proposed to reduce the variance of policy gradient estimation since its introduction.
The control variate method, namely subtracting some baseline from approximate returns, is widely used to reduce the variance while avoiding the introduction of bias into the estimation \cite{Williams1992, Sutton1999, Greensmith2004, Gu2017, Gu2017b}.
Relying on predicted values instead of sampled returns is also popular despite the bias it often introduces \cite{Degris2012, Mnih2016b, Schulman2016, Ciosek2017a}.
Our approach reduces the variance differently from these two common approaches.
Therefore, it can be easily combined with the existing techniques to reduce the variance further while not introducing additional bias.

The problem of using probability distributions with unbounded support for control problems with bounded action spaces was pointed out in \cite{Chou2017}, which proposed modeling policies as beta distributions as a solution.
While they reported performance improvements by using beta policies across multiple continuous control environments, Gaussian policies still nearly dominate the deep RL literature \cite{baselines, Henderson2017a, Tassa2018}.
Truncated distributions have also been used to deal with bounded action spaces in prior work \cite{Nakano2012, Shariff2013, Zimmer2016}.
In contrast, our approach allows us to keep using the same policy parameterizations, typically Gaussians, and still exploit action bounds.
It is also possible to see CAPG as using a multimodal distribution with bounded support, whereas beta policies and truncated Gaussian policies are unimodal.
For example, a clipped Gaussian policy can easily learn to choose end-values of the action bounds with a high probability by moving its mean toward the corresponding end, while beta and truncated Gaussian policies need to be near-deterministic to choose near-end values with a high probability.

Exploiting the integral form of policy gradients to reduce the variance has been proposed in \cite{Ciosek2017a, Asadi2017}.
They directly evaluated the integral over the whole action space, which can be analytically computed for limited classes of action value approximators and policies.
Their method can reduce the variance by eliminating the need for Monte-Carlo estimation of policy gradients while introducing bias from action value approximation.
Our method only evaluates the integral outside the action bounds, i.e., where action values are constant, and thus is unbiased.

\section{Discussion}

We have shown that the variance of policy gradient estimation can be reduced by exploiting the fact that actions are clipped before they are sent to the environment.
An unbiased and lower-variance policy gradient estimator, named CAPG, has been proposed based on our analysis.
CAPG is easy to implement and can be combined with existing variance reduction techniques, such as control variates and value function approximations.

We numerically analyzed CAPG's behavior on simple continuum-armed bandit problems, confirming its efficacy in variance reduction.
When incorporated into existing deep RL algorithms, CAPG generally achieved the same or better performance on challenging simulated control benchmark tasks, indicating its promise as an alternative to the conventional estimator.

While a Gaussian policy is the most common choice in policy gradient-based continuous control, distributions with bounded support may be more suitable for bounded action spaces.
Prior work has proposed beta and truncated distributions to explore this direction.
We argued that CAPG can also be seen as estimating the policy gradient of a transformed distribution with bounded support, termed a clipped distribution.
Further studies are needed on the behaviors of different kinds of distributions as policy representations.

\ifdefined\isaccepted
\section*{Acknowledgments}

We thank Toshiki Kataoka, Kenta Oono, Masaki Watanabe and others at Preferred Networks for insightful comments and discussions.
\fi

\bibliography{ref}
\bibliographystyle{icml2018}

\clearpage

\onecolumn

\section*{Appendix}

\lemmaestimator*
\begin{proof}
Noting that $\pi_\theta(u|s)$ allows the exchange of derivative and integral, we get
\begin{align}
\E_u[\I_{u \le \low} \nabla_\theta \log \pi_\theta(u|s)] \nonumber
  &= \int_{-\infty}^\low \pi_\theta(u|s) \nabla_\theta \log \pi_\theta(u|s)du \nonumber
\\&= \int_{-\infty}^\low \nabla_\theta \pi_\theta(u|s)du \nonumber
\\&= \nabla_\theta \int_{-\infty}^\low \pi_\theta(u|s)du \nonumber
\\&= \nabla_\theta \Pi_\theta(\low|s) \nonumber
\\&= \Pi_\theta(\low|s) \nabla_\theta \log \Pi_\theta(\low|s) \nonumber
\\&= \E_u[\I_{u \le \low} \nabla_\theta \log \Pi_\theta(\low|s)]. \nonumber
\end{align}

A similar calculation shows
\begin{equation}
\E_u[\I_{\high \le u} \nabla_\theta \log \pi_\theta(u|s)] \nonumber
= \E_u [\I_{\high \le u} \nabla_\theta \log (1 - \Pi_\theta(\high|s))],
\end{equation}
where we used $\int_{\high}^{\infty} \pi_\theta(u|s)du = 1 - \Pi_\theta(\high|s)$
instead of $\int_{-\infty}^\low \pi_\theta(u|s)du = \Pi_\theta(\low|s) $.
\end{proof}

\lemmavarianceinequality*
\begin{proof}
Because both $\I_{u \le \low} \nabla_\theta \log \pi_\theta(u|s)$ and $\I_{u \le \low} \nabla_\theta \log \Pi_\theta(\low|s)$ have the same expected values from Lemma~\ref{lemma:estimator}, the difference of their variances is written as follows:
\begin{align}
\Var_u[\I_{u \le \low} \nabla_\theta \log \pi_\theta(u|s)] - \Var_u[\I_{u \le \low} \nabla_\theta \log \Pi_\theta(\low|s)] = \E_u[\I_{u \le \low} (\nabla_\theta \log \pi_\theta(u|s))^2] - \E_u[\I_{u \le \low} (\nabla_\theta \log \Pi_\theta(\low|s))^2].
\end{align}
The difference above is nonnegative because
\begin{align}
\E_u[\I_{u \le \low} (\nabla_\theta \log \pi_\theta(u|s))^2] \nonumber
  &=\int_{-\infty}^{\low} \pi_\theta(u|s) (\nabla_\theta \log \pi_\theta(u|s))^2du \nonumber
\\&= \Pi_\theta(\low|s) \int_{-\infty}^{\infty} \I_{u \le \low}\frac{\pi_\theta(u|s)}{\Pi_\theta(\low|s)} (\nabla_\theta \log \pi_\theta(u|s))^2du \nonumber
\\&\ge \Pi_\theta(\low|s) \Big(\int_{-\infty}^{\infty} \I_{u \le \low}\frac{\pi_\theta(u|s)}{\Pi_\theta(\low|s)} \nabla_\theta \log \pi_\theta(u|s)du \Big)^2 \nonumber
\\&= \Pi_\theta(\low|s) \Big(\frac{1}{\Pi_\theta(\low|s)} \int_{-\infty}^{\low} \pi_\theta(u|s) \nabla_\theta \log \pi_\theta(u|s)du \Big)^2 \nonumber
\\&= \Pi_\theta(\low|s) \Big(\frac{1}{\Pi_\theta(\low|s)} \int_{-\infty}^{\low} \nabla_\theta \pi_\theta(u|s)du\Big)^2 \nonumber
\\&= \Pi_\theta(\low|s) \Big(\frac{1}{\Pi_\theta(\low|s)} \nabla_\theta \int_{-\infty}^{\low} \pi_\theta(u|s)du\Big)^2 \nonumber
\\&= \Pi_\theta(\low|s) \Big(\frac{1}{\Pi_\theta(\low|s)} \nabla_\theta \Pi_\theta(\low|s)\Big)^2 \nonumber
\\&= \Pi_\theta(\low|s) (\nabla_\theta \log \Pi_\theta(\low|s))^2 \nonumber
\\&= \E_u[\I_{u \le \low} (\nabla_\theta \log \Pi_\theta(\low|s))^2], \nonumber
\end{align}
where the equality holds only when $\nabla_\theta \log \pi_\theta(u|s)$ is constant over $u \le \low$.

A similar calculation shows
\begin{equation}
  \Var_u[\I_{\high \le u} \nabla_\theta \log \pi_\theta(u|s)] \geq \Var_u[\I_{\high \le u} \nabla_\theta \log (1-\Pi_\theta(\high|s))],
\end{equation}
where the equality holds only when $\nabla_\theta \log \pi_\theta(u|s)$ is constant over $\high \le u$.
\end{proof}

\lemmavectoraction*
\begin{proof}

Applying Lemma~\ref{lemma:CAPG} to each $u_i$ yields
\begin{align}
  \E_{\vec{u}}[f(s,\vec{u})\capgpsi^{(i)}(s,u_i)] &= \E_{\vec{u}}[f(s,\vec{u})\psi^{(i)}(s,u_i)],\label{eq:ui_expectation_equality} \\
  \V_{\vec{u}}[f(s,\vec{u})\capgpsi^{(i)}(s,u_i)] &\le \V_{\vec{u}}[f(s,\vec{u})\psi^{(i)}(s,u_i)].\label{eq:ui_variance_inequality}
\end{align}

Because each action is conditionally independent, we can decompose the expectations as
\begin{align}
  \E_{\vec{u}}[f(s,\vec{u})\psi(s,\vec{u})]     &= \sum_{i=1}^d \E_{\vec{u}}[f(s,\vec{u})\psi^{(i)}(s,u_i)], \\
  \E_{\vec{u}}[f(s,\vec{u})\capgpsi(s,\vec{u})] &= \sum_{i=1}^d \E_{\vec{u}}[f(s,\vec{u})\capgpsi^{(i)}(s,u_i)].
\end{align}
From \eqref{eq:ui_expectation_equality}, these two are equal, and hence \eqref{eq:vector_capg_equality} holds.

The variances can also be decomposed as
\begin{align}
  \V_{\vec{u}}[f(s,\vec{u})\psi(s,\vec{u})] &= \sum_{i=1}^d \V_{\vec{u}}[f(s,\vec{u})\psi^{(i)}(s,u_i)] + \sum_{1 \le i < j \le d} \Cov[f(s,\vec{u})\psi^{(i)}(s,u_i),f(s,\vec{u})\psi^{(j)}(s,u_j)], \label{eq:var_vector_psi} \\
  \V_{\vec{u}}[f(s,\vec{u})\capgpsi(s,\vec{u})] &= \sum_{i=1}^d \V_{\vec{u}}[f(s,\vec{u})\capgpsi^{(i)}(s,u_i)] + \sum_{1 \le i < j \le d} \Cov[f(s,\vec{u})\capgpsi^{(i)}(s,u_i),f(s,\vec{u})\capgpsi^{(j)}(s,u_j)], \label{eq:var_vector_capgpsi}
\end{align}
where
\begin{align}
\MoveEqLeft
  \Cov[f(s,\vec{u})\psi^{(i)}(s,u_i),f(s,\vec{u})\psi^{(j)}(s,u_j)]\\
  &= \E_{\vec{u}}[(f(s,\vec{u}))^2\psi^{(i)}(s,u_i)\psi^{(j)}(s,u_j)] - \E_{\vec{u}}[f(s,\vec{u})\psi^{(i)}(s,u_i)]\E_{\vec{u}}[f(s,\vec{u})\psi^{(j)}(s,u_j)], \label{eq:cov_psi}\\
\MoveEqLeft
  \Cov[f(s,\vec{u})\capgpsi^{(i)}(s,u_i),f(s,\vec{u})\capgpsi^{(j)}(s,u_j)]\\
  &= \E_{\vec{u}}[(f(s,\vec{u}))^2\capgpsi^{(i)}(s,u_i)\capgpsi^{(j)}(s,u_j)] - \E_{\vec{u}}[f(s,\vec{u})\capgpsi^{(i)}(s,u_i)]\E_{\vec{u}}[f(s,\vec{u})\capgpsi^{(j)}(s,u_j)]. \label{eq:cov_capgpsi}
\end{align}
The first term of \eqref{eq:var_vector_capgpsi} is smaller than or equal to that of \eqref{eq:var_vector_psi} from \eqref{eq:ui_variance_inequality}.
Thus, to prove \eqref{eq:vector_capg_inequality}, it is sufficient to show that the second terms of \eqref{eq:var_vector_psi} and \eqref{eq:var_vector_capgpsi} are equal.

The second terms of \eqref{eq:cov_psi} and \eqref{eq:cov_capgpsi} are equal from \eqref{eq:ui_expectation_equality}.
Using the law of total variance, the first terms of \eqref{eq:cov_psi} and \eqref{eq:cov_capgpsi} can be written as
\begin{align}
\E_{\vec{u}}[(f(s,\vec{u}))^2\psi^{(i)}(s,u_i)\psi^{(j)}(s,u_j)]
&= \E_{\vec{u}_{\setminus i,j}}\bigg[\E_{u_i}\Big[\E_{u_j}[(f(s,\vec{u}))^2\psi^{(i)}(s,u_i)\psi^{(j)}(s,u_j)|\vec{u}_{\setminus j}]\Big|\vec{u}_{\setminus i,j}\Big]\bigg]\\
&= \E_{\vec{u}_{\setminus i,j}}\bigg[\E_{u_i}\Big[\psi^{(i)}(s,u_i)\E_{u_j}[(f(s,\vec{u}))^2\psi^{(j)}(s,u_j)|\vec{u}_{\setminus j}]\Big|\vec{u}_{\setminus i,j}\Big]\bigg], \label{eq:cov_first_psi}\\
\E_{\vec{u}}[(f(s,\vec{u}))^2\capgpsi^{(i)}(s,u_i)\capgpsi^{(j)}(s,u_j)] &= \E_{\vec{u}_{\setminus i,j}}\bigg[\E_{u_i}\Big[\capgpsi^{(i)}(s,u_i)\E_{u_j}[(f(s,\vec{u}))^2\capgpsi^{(j)}(s,u_j)|\vec{u}_{\setminus j}]\Big|\vec{u}_{\setminus i,j}\Big]\bigg] \label{eq:cov_first_capgpsi},
\end{align}
where $\vec{u}_{\setminus j}$ denotes a vector $\vec{u}$ with the $j$-th element excluded, and $\vec{u}_{\setminus i,j}$ denotes a vector $\vec{u}$ with the $i$-th and $j$-th elements excluded.
Noting the fact that $(f(s,\vec{u}))^2$ is a function of $u_j$ conditioned on $s$ and $\vec{u}_{\setminus j}$, we can have the following equation by applying Lemma~\ref{lemma:CAPG}.
\begin{equation}
  \E_{u_j}[(f(s,\vec{u}))^2\capgpsi^{(j)}(s,u_j)|\vec{u}_{\setminus j}] = \E_{u_j}[(f(s,\vec{u}))^2\psi^{(j)}(s,u_j)|\vec{u}_{\setminus j}].
\end{equation}
Similarly, we can use the fact that $\E_{u_j}[(f(s,\vec{u}))^2\psi^{(j)}(s,u_j)|\vec{u}_{\setminus j}]$ is a function of $u_i$ conditioned on $s$ and $\vec{u}_{\setminus i,j}$ to show
\begin{equation}
  \E_{u_i}\Big[\capgpsi^{(i)}(s,u_i)\E_{u_j}[(f(s,\vec{u}))^2\capgpsi^{(j)}(s,u_j)|\vec{u}_{\setminus j}]\Big|\vec{u}_{\setminus i,j}\Big] = \E_{u_i}\Big[\psi^{(i)}(s,u_i)\E_{u_j}[(f(s,\vec{u}))^2\psi^{(j)}(s,u_j)|\vec{u}_{\setminus j}]\Big|\vec{u}_{\setminus i,j}\Big].
  \label{eq:inner_expectation_eq}
\end{equation}
From \eqref{eq:inner_expectation_eq}, we can see \eqref{eq:cov_first_psi} and \eqref{eq:cov_first_capgpsi} are equal.
This implies that the first terms of \eqref{eq:cov_psi} and \eqref{eq:cov_capgpsi} are equal, and the second terms of \eqref{eq:var_vector_psi} and \eqref{eq:var_vector_capgpsi} are equal.
Therefore, \eqref{eq:vector_capg_inequality} is satisfied.
The equality of \eqref{eq:vector_capg_inequality} holds only when $\nabla_\theta \log \pitheta^{(i)}(u|s)$ is constant over both $u \le \low$ and $\high \le u$ for all $1 \le i \le d$.
\end{proof}

\end{document}